\DeclareMathOperator*{\argmax}{arg\,max}
\newtheorem{theorem}{Theorem}
\newtheorem{assumption}{Assumption}
\newcommand\Tstrut{\rule{0pt}{2.6ex}}       % "top" strut
\newcommand\Bstrut{\rule[-0.9ex]{0pt}{0pt}} % "bottom" strut
\newcommand{\TBstrut}{\Tstrut\Bstrut} % top&bottom struts
\newcommand\numberthis{\addtocounter{equation}{1}\tag{\theequation}}
\newcommand{\papertitle}{Educating Text Autoencoders: Latent Representation Guidance via Denoising}
\begin{document}

\twocolumn[
\icmltitle{\papertitle}

% It is OKAY to include author information, even for blind
% submissions: the style file will automatically remove it for you
% unless you've provided the [accepted] option to the icml2020
% package.

% List of affiliations: The first argument should be a (short)
% identifier you will use later to specify author affiliations
% Academic affiliations should list Department, University, City, Region, Country
% Industry affiliations should list Company, City, Region, Country

% You can specify symbols, otherwise they are numbered in order.
% Ideally, you should not use this facility. Affiliations will be numbered
% in order of appearance and this is the preferred way.
\icmlsetsymbol{equal}{*}

\begin{icmlauthorlist}
\icmlauthor{Tianxiao Shen}{mit}
\icmlauthor{Jonas Mueller}{amazon}
\icmlauthor{Regina Barzilay}{mit}
\icmlauthor{Tommi Jaakkola}{mit}
\end{icmlauthorlist}

\icmlaffiliation{mit}{MIT CSAIL}
\icmlaffiliation{amazon}{Amazon Web Services}

\icmlcorrespondingauthor{Tianxiao Shen}{tianxiao@mit.edu}

% You may provide any keywords that you
% find helpful for describing your paper; these are used to populate
% the "keywords" metadata in the PDF but will not be shown in the document
\icmlkeywords{Machine Learning, ICML}

\vskip 0.3in
]

% this must go after the closing bracket ] following \twocolumn[ ...

% This command actually creates the footnote in the first column
% listing the affiliations and the copyright notice.
% The command takes one argument, which is text to display at the start of the footnote.
% The \icmlEqualContribution command is standard text for equal contribution.
% Remove it (just {}) if you do not need this facility.

\printAffiliationsAndNotice{}  % leave blank if no need to mention equal contribution
%\printAffiliationsAndNotice{\icmlEqualContribution} % otherwise use the standard text.

\begin{abstract}
% While neural language models have recently demonstrated impressive performance in unconditional text generation, controllable generation and manipulation of text remain challenging.
%\textcolor{red}{TODO: rewrite... TJ: done} 

Generative autoencoders offer a promising approach for controllable text generation by leveraging their latent sentence representations.
However, current models struggle to maintain coherent latent spaces required to
perform meaningful text manipulations via latent vector operations.
Specifically, we demonstrate by example that neural encoders do not necessarily map similar sentences to nearby latent vectors. A theoretical explanation for this phenomenon establishes that high-capacity autoencoders can learn an arbitrary mapping between sequences and associated latent representations.
To remedy this issue, we augment adversarial autoencoders with a denoising objective where original sentences are reconstructed from perturbed  versions (referred to as DAAE).
We prove that this simple modification guides the latent space geometry of the resulting model by encouraging the encoder to map similar texts to similar latent representations.
In empirical comparisons with various types of autoencoders, our model provides the best trade-off between generation quality and reconstruction capacity.
Moreover, the improved geometry of the DAAE latent space enables \emph{zero-shot} text style transfer via simple latent vector arithmetic.\footnote{Our code and data are available at \url{https://github.com/shentianxiao/text-autoencoders}}

\end{abstract}

\section{Introduction}

\begin{figure}[t]
\centering
\includegraphics[width=0.48\textwidth]{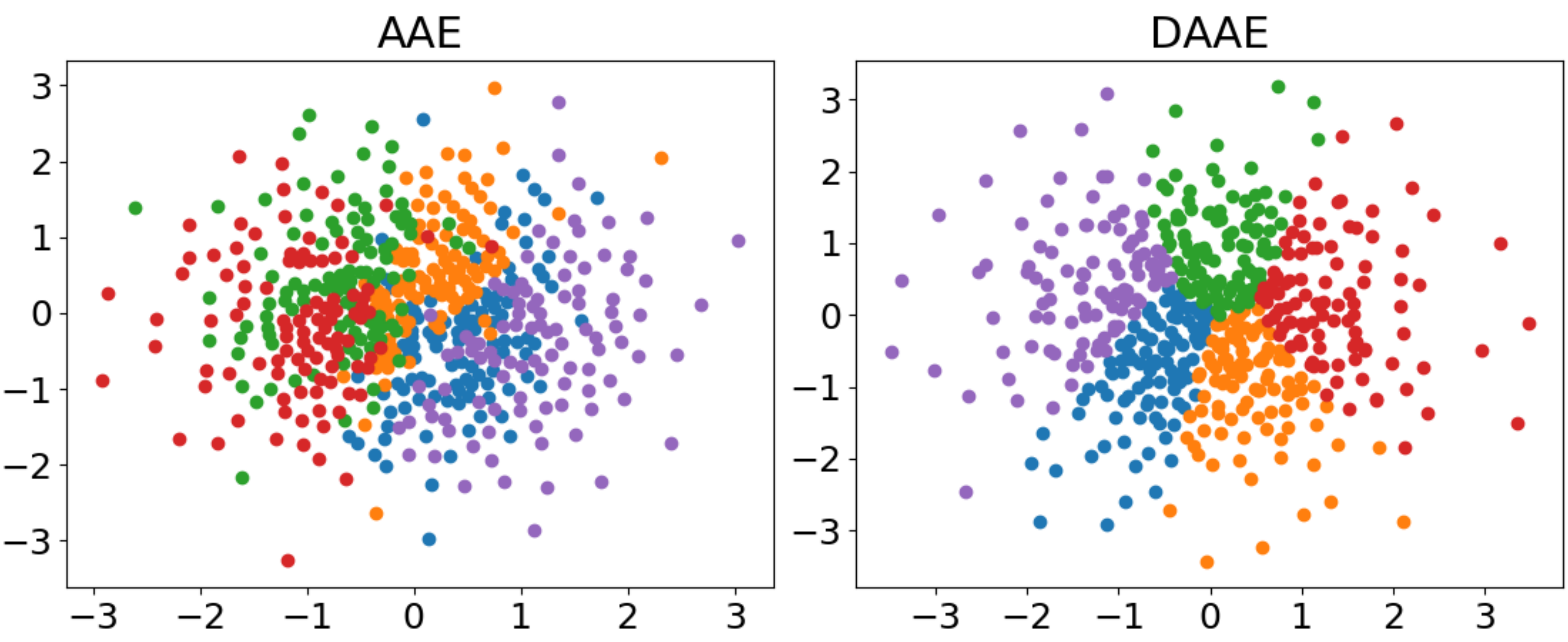}
%\vspace{-10pt}
\caption{\small 
Latent representations learned by AAE and DAAE when mapping clustered sequences in 
$\mathcal X=\{0, 1\}^{50}$ to $\mathcal Z=\mathbb R^2$. The training data stem from 5 underlying clusters, with 100 sequences sampled from each (colored accordingly by cluster identity).
% (there are 5 clusters, each with 100 sequences). Different colored points are from different data clusters.
} 
\label{fig:toy_exp}
\end{figure}

Autoencoder-based generative models have become popular tools for advancing controllable text generation such as style or sentiment transfer~\citep{bowman2015generating,hu2017toward,shen2017style,zhao2018adversarially}. By representing sentences as vectors in a latent space, these models offer % in principle 
an attractive continuous approach to manipulating text by means of simple latent vector arithmetic. However, the success of such manipulations rests heavily on the geometry of the latent space representations and its ability to capture underlying sentence semantics. We discover that without additional guidance, fortuitous geometric % agreements 
alignments are unlikely to arise, shedding light on challenges faced by existing methods.
%TJ: I don't think you want to commit to specific desiderata here. We would be measured against them in the paper. Removing \textcolor{red}{desiderata of latent space geometry} for this reason

In this work, we focus on the latent space geometry of adversarial autoencoders \citep[AAEs]{makhzani2015adversarial}. 
%To learn representations of text that can be employed generatively, this work uses adversarial autoencoders~\citep[AAEs]{makhzani2015adversarial}. 
In contrast to variational autoencoders~\citep[VAEs]{kingma2013auto}, AAEs maintain a strong coupling between their encoder and decoder, ensuring that the decoder does not ignore sentence representations produced by the encoder~\citep{bowman2015generating}. 
The training objective for AAEs consists of two parts: the ability to reconstruct sentences and an additional constraint that the sentence encodings follow a given prior distribution (typically Gaussian). 
%Despite the crucial advantages of the AAE framework, 
We find that these objectives alone do not suffice to enforce proper latent space geometry: % for controllable text generation.
in a toy example with clustered data sequences, a perfectly-trained AAE undesirably mixes different clusters in its latent space (Figure~\ref{fig:toy_exp}, Left).

We provide a theoretical explanation for this phenomenon by analyzing high-capacity encoder/decoder networks in modern sequence models.
For discrete objects such as sentences where continuity assumptions no longer hold, %flexible architectures 
powerful AAEs can % easily 
learn to map training sentences to latent prior samples arbitrarily, while 
retaining perfect reconstruction. In such cases, even minimal latent space manipulations can yield random, unpredictable changes in the resulting text.

%\textcolor{red}{TODO: write}
To remedy this issue, we augment AAEs with a simple denoising
objective~\citep{vincent2008extracting,creswell2018denoising}, requiring perturbed sentences (with random words missing) to be reconstructed back to their original versions. 
We prove that disorganized encoder-decoder mappings are suboptimal under the denoising criterion.
%For the resulting denoising AAE model (or DAAE for short), we prove that the use of denoising induces better latent space geometry as it prevents learning of disorganized encoder-decoder maps.  
As a result, the denoising AAE model (or DAAE for short) will map similar sentences to similar latent representations.
Empirical studies confirm that denoising promotes sequence neighborhood preservation, consistent with our theory (Figure \ref{fig:toy_exp}, Right).
Our systematic evaluations demonstrate that DAAE maintains the best trade-off between producing high-quality text vs.\ informative sentence representations.
% , substantially outperforming AAE/VAE baselines.
We further investigate the extent to which text can be manipulated via simple transformations of latent representations. DAAE is able to perform sentence-level vector arithmetic~\citep{mikolov2013linguistic} to change the tense or sentiment of a sentence without any supervision during training.
%, where we can change the tense and sentiment of a sentence by simply adding a fixed vector offset to its latent representation.
% According to human evaluation, our model successfully inverts the tense while keeps the content of 50\% sentences through this simple vector offset.
Denoising also helps produce higher quality sentence interpolations, suggesting better linguistic continuity
%on topic and syntactic structure
in its latent space. %, as in the toy example Figure \ref{fig:toy_exp} (Right).

\iffalse
old intro

Neural language models trained with massive datasets have been able to generate realistic 
% shown impressive performance in generating realistic 
% and coherent 
text that can be hard to distinguish from human writing~\citep{radford2019language}. Still, controllable generation and manipulation of text remain difficult~\citep{hu2017toward}. 
While this can, in principle, be done by mapping text to continuous latent representations where desired modifications are enacted via real-valued arithmetic operations, such an approach has not yet proven successful. 
Part of the challenge lies in molding a meaningful latent space geometry for discrete text data.

%Unlike many latent variable generative models, the adversarial autoencoder (AAE) can employ deterministic encodings while still minimizing discrepancy between the generative and data distributions \citep{tolstikhin2017wasserstein}, and thus has emerged as a popular choice for text data \cite{??}.% \textcolor{red}{TODO: AAE for text citations}.  

In this paper, we extend AAEs to make them substantially more effective for text generation and manipulation. Perhaps surprisingly, this is possible by augmenting AAEs with a simple denoising objective, where original sentences are reconstructed from perturbed versions with random word removal~\citep{lample2017unsupervised,artetxe2017unsupervised}. 
%, enabling the model to empirically outperform alternative language models derived from autoencoders. 
% withbrings the greatest advantages in text modeling, as we shall theoretically prove it and empirically verify it through extensive experiments.
% We provide the first theoretical analysis of how denoising techniques help in text representation learning. 
While the AAE may learn an arbitrary mapping between data examples and latent variables, we prove that introducing perturbations which reflect local structures in the data space can help preserve this geometry in the latent space. 
% We prove that similar sentences are encouraged to map to similar latent representations  as a result. 
% Our model can also be seen as optimizing an upper bound on the Wasserstein distance between data and model distributions. 
%, where the posterior family is limited to a form under which similar $x$ have overlapping regions to ensure meaningful latent representations are learned. 
% This paragraph can be omitted/cut if we need space:
% We systematically evaluate various text autoencoders in terms of their generation and reconstruction capabilities~\citep{cifka2018eval}. The results demonstrate that our proposed model provides the best trade-off between producing high-quality text vs.\ informative sentence representations.
% , substantially outperforming AAE/VAE baselines.

\fi

\section{Related Work}

%RB this sentence sounds weird
\paragraph{Denoising}
%Denoising was first introduced into standard autoencoders by
\citet{vincent2008extracting} first introduced denoising autoencoders (DAEs) to learn robust image representations, and
%Without a latent prior, DAE requires sophisticated MCMC sampling to be employed generatively \citep{bengio2013generalized}.
\citet{creswell2018denoising} applied DAAEs to generative image modeling.
Previous analysis of denoising focused on continuous image data and single-layer networks~\citep{poole2014analyzing}.
Here, we demonstrate that input perturbations are particularly useful for discrete text modeling with powerful sequence networks, as they encourage preservation of data structure in latent space representations.

%However, their denoising autoencoder (DAE) without a latent prior requires sophisticated MCMC sampling to be employed generatively \citep{bengio2013generalized}.
% , unlike the simple two-stage sample-$z$ decode-$x$ procedure for our model.

%as it still employs a per-example KL penalty, their model remains prone to posterior collapse. 
% (in fact, input perturbations actually exacerbate this issue). 
% \citet{im2017denoising} proposed VAE with input perturbations, similar to what we suggest for the AAE. Their model still relies on a per-example KL penalty
%still relies on a per-example KL penalty % and approximate posteriors based on Gaussian-mixtures,
% and thus remains prone to posterior collapse.
% the same as in standard VAE.
%which both remain harmful for aforementioned reasons. As the KL penalty is no longer available in analytical form, the method proposed to remedy this in the denoising VAE necessitates a more complex procedure to sample generatively than both the standard VAE and our model. While \citet{im2017denoising} find moderate amount of input perturbation improves the performance of a standard VAE, they do not explore solely relying on corruptions as the only source of stochasticity in the latent representations, which is the case in our model.

\paragraph{Variational Autoencoder (VAE)}
Apart from AAE that this paper focuses on, another popular latent variable generative model is VAE~\citep {kingma2013auto}. Unfortunately, when the decoder is a powerful autoregressive model (such as a language model),
VAE suffers from the \emph{posterior collapse} problem where the latent representations are ignored 
\citep{bowman2015generating,chen2016variational}.
If denoising is used in conjunction with VAE~\citep{im2017denoising} in text applications, then the noisy inputs will only exacerbate VAE's neglect of the latent variable.
\citet{bowman2015generating} proposed to dropout words on the decoder side to alleviate VAE’s collapse issue. However, even with a weakened decoder and other techniques including KL-weight annealing and adjusting training dynamics, it is still difficult to inject significant content into the
latent code~\citep{yang2017improved,kim2018semi,he2019lagging}. Alternatives like the $\beta$-VAE~\citep{higgins2017beta} appear necessary.

\paragraph{Controllable Text Generation}
Previous work has employed autoencoders trained with attribute label information to control text generation~\citep{hu2017toward,shen2017style,logeswaran2018content,subramanian2018multiple}.
We show that the proposed DAAE can perform text manipulations despite being trained in a completely unsupervised manner without any labels.
% Moreover, it serves as a superior base autoencoder model when additional supervision signals are available. \textcolor{red}{scale}
% This part could be cut out to save space: 
This suggests that on the one hand, our model can be adapted to semi-supervised learning when a few labels are available. On the other hand, it can be easily scaled up to train one large model on unlabeled corpora and then applied for transferring various styles.

\section{Methods}

Define $\mathcal X=\mathcal V^m$ as the sentence space of  sequences of discrete symbols from vocabulary $\mathcal V$ (with  length $\le m$), and let $\mathcal Z=\mathbb R^d$ denote a continuous space of latent representations.
Our goal is to learn a mapping between a data distribution $p_{\text{data}}(x)$ over $\mathcal X$ and a given prior distribution $p(z)$ over latent space $\mathcal Z$. 
% Following common practice, a Gaussian prior is used in our experiments (although not required by our methodology). 
Such a mapping allows us to manipulate discrete data through its continuous latent representation $z$, and provides 
% be adopted as 
a generative model whereby new data can be sampled by drawing $z$ from the prior and then
mapping it to the corresponding sequence in $\mathcal X$.
%sampling a corresponding sequence via $p(x | z)$.

\paragraph{Adversarial Autoencoder (AAE)}
The AAE involves a deterministic encoder $E:\mathcal X \rightarrow \mathcal Z$
mapping from data space to latent space,
a probabilistic decoder $G:\mathcal Z \rightarrow \mathcal X$
that generates sequences from latent representations, % and evaluates its likelihood using parameterized distribution $p_G(x | z)$
and a discriminator $D: \mathcal Z \rightarrow [0,1]$
that tries to distinguish between encodings of data $E(x)$ and samples from $p(z)$.  
Both $E$ and $G$ are recurrent neural nets (RNNs).\footnote{We also tried Transformer models~\citep{vaswani2017attention}, but they did not outperform LSTMs on our moderate-size datasets.}
$E$ takes input sequence $x$ and uses the final RNN hidden state as its encoding $z$. $G$ generates a sequence $x$
autoregressively, with each step conditioned on $z$ and symbols emitted in preceding steps.
%by outputting one symbol at a time, and at each step conditions on $z$ in addition to previous symbols.
% The discriminator 
$D$ is a feed-forward net that infers the probability of $z$ coming from the prior rather than the encoder.
$E$, $G$ and $D$ are trained jointly with a min-max objective:
\begin{align}
    \min_{E,G}\max_{D}  ~ \mathcal L_{\text{rec}}(\theta_E,\theta_G) - \lambda \mathcal L_{\text{adv}}(\theta_E,\theta_D)\label{eq:aae}
\end{align}
with:
%\vspace{-5pt}
\begin{align}
  \ \    \mathcal L_{\text{rec}}(\theta_E,\theta_G) =~ &  \mathbb E_{p_{\text{data}}(x)}[-\log p_G(x|E(x))] 
    \label{eq:rec}
    \\
    \mathcal L_{\text{adv}}(\theta_E,\theta_D) =~ &  \mathbb E_{p(z)}[-\log D(z)] ~+ \nonumber \\
    & \mathbb E_{p_{\text{data}}(x)}[-\log(1-D(E(x)))]
\end{align}
where reconstruction loss $\mathcal L_{\text{rec}}$ and adversarial loss\footnote{We actually train $E$ to maximize $\log D(E(x))$ instead of $-\log(1-D(E(x)))$, which is more stable in practice~\citep{goodfellow2014generative}. We also tried the alternative WGAN objective~\citep{arjovsky2017wasserstein} but did not notice any gains.} $\mathcal L_{\text{adv}}$ are  weighted via hyperparameter $\lambda>0$ during training.

\paragraph{Denoising Adversarial Autoencoder (DAAE)}
We extend the AAE by introducing local $x$-perturbations and requiring reconstruction of each original $x$ from a randomly perturbed version.  
As we shall see, this implicitly encourages similar sequences to  map to similar latent representations, without requiring any additional training objectives. 
Specifically, given a perturbation process $C$ that stochastically corrupts $x$ to some nearby ${\tilde x \in \mathcal{X}}$, let $p(x,\tilde x)=p_{\text{data}}(x)p_C(\tilde x|x)$ and $p(\tilde x)=\sum_x p(x,\tilde x)$. The corresponding DAAE training objectives are:
\begin{align}
  ~~~~~~~~ \mathcal L_{\text{rec}}(\theta_E,\theta_G) =~ &  \mathbb E_{p(x,\tilde x)}[-\log p_G(x|E(\tilde x))] \label{eq:rec_denoise}
    \\
    \mathcal L_{\text{adv}}(\theta_E,\theta_D) =~ &  \mathbb E_{p(z)}[-\log D(z)] ~+ \nonumber \\
    & \mathbb E_{p(\tilde x)}[-\log(1-D(E(\tilde x)))]
\end{align}
Here, $\mathcal L_{\text{rec}}$ is the loss of reconstructing $x$ from $\tilde x$, and $\mathcal L_{\text{adv}}$ is the adversarial loss
evaluated using perturbed $\tilde x$.
\iffalse
\begin{align}
    \min_{E,G}\max_{D}~ \mathcal L_{\text{rec}}(\theta_E,\theta_G) - \lambda \mathcal L_{\text{adv}}(\theta_E,\theta_D)
\end{align}
where
\begin{align}
   \mathcal L_{\text{rec}}(\theta_E,\theta_G) = \mathbb E_{p(x,\tilde x)}[-\log p_G(x|E(\tilde x))]
\end{align}
is the loss of reconstructing $x$ from $\tilde x$,
\begin{align}
    \mathcal L_{\text{adv}}(\theta_E,\theta_D) = \mathbb E_{p(z)}[-\log D(z)] + \mathbb E_{p(\tilde x)}[-\log(1-D(E(\tilde x)))]
\end{align}
is the adversarial loss
\fi
This objective simply combines the denoising technique with AAE~\citep{vincent2008extracting,creswell2018denoising}, resulting in the denoising AAE (DAAE) model.

Let $p_E(z|x)$ denote the encoder distribution (for a deterministic encoder it is concentrated at a single point).
With perturbation process $C$, the posterior distributions of the latent representations are of the form: %\textcolor{red}{TODO: define $p_E(z|x)$}:
\begin{align}\label{eq:perturbedposterior}
    q(z|x)=\sum_{\tilde x} p_C(\tilde x|x)p_E(z|\tilde x)
\end{align}
This enables the DAAE to utilize stochastic encodings even by merely employing a deterministic %ally-parameterized 
encoder network trained without any reparameterization-style tricks.
Note that since $q(z|x)$ of the form (\ref{eq:perturbedposterior}) is a subset of all possible conditional distributions,
our model is still minimizing an upper bound of the Wasserstein distance between data and model distributions, as previously shown by~\citet{tolstikhin2017wasserstein} for AAE (see Appendix~\ref{sec:wasserstein} for a full proof).

\section{Latent Space Geometry}\label{sec:geometry}
\begin{figure*}[t]
\centering
\includegraphics[width=\textwidth]{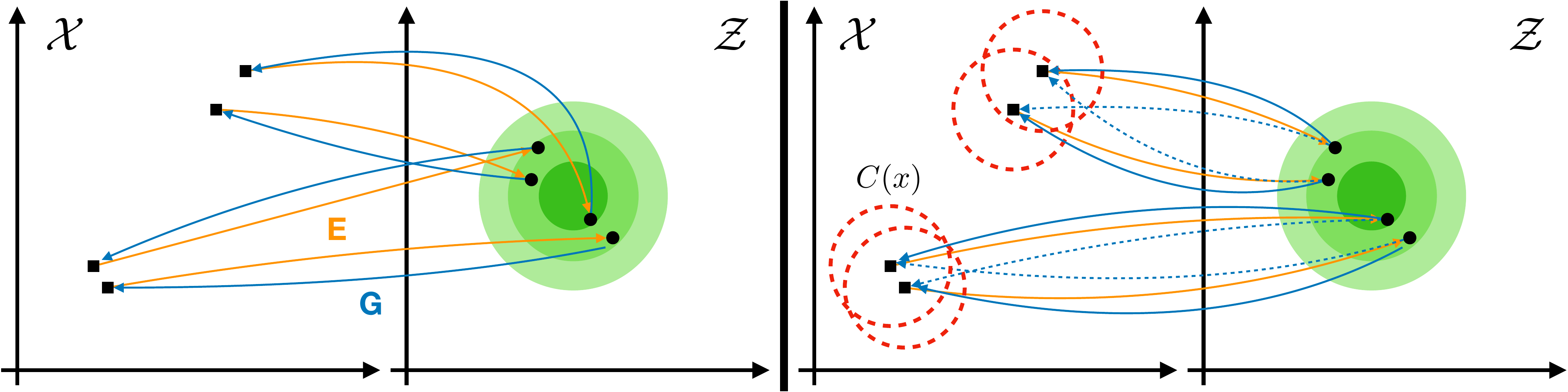}
\caption{\small 
Illustration of the learned latent geometry by AAE before and after introducing $x$ perturbations. With high-capacity encoder/decoder networks, a standard AAE has no preference over $x$-$z$ couplings and thus can learn a random mapping between them (Left). Trained with local perturbations $C(x)$, DAAE learns to map similar $x$ to close $z$ to best achieve the denoising objective (Right).
}\label{fig:illustration}
\end{figure*}

Denoising was previously viewed as a technique to help learn data manifolds and extract more robust representations~\citep{vincent2008extracting,bengio2013generalized}, but there has been little formal analysis of precisely how it helps.
Here, we show that denoising guides the latent space geometry of text autoencoders to preserve neighborhood structure in data.
By mapping similar text to similar representations, we can perform smoother sentence interpolation and can better implement meaningful text manipulations through latent vector operations.
\subsection{A Toy Example}\label{sec:toy_experiment}
%Based on these training objectives, can one expect the AAE to
%To test this empirically, we conduct an experiment with synthetic data sampled from an underlying cluster structure.
We pose the following question: In practice, will autoencoders learn a smooth and regular latent space geometry that reflects the underlying structure of their training data?
To study this, we conduct experiments using synthetic data with a clear cluster structure to see if the clusters are reflected in the learned latent representations.  

We randomly sample 5 binary (0/1) sequences of length 50 to serve as cluster centers. From each cluster, 100 sequences are sampled by randomly flipping elements of the center sequence with a probability of 0.2.  
The resulting dataset has 500 sequences from 5 underlying clusters, where sequences stemming from the same cluster typically have many more elements in common than those from different clusters. 
We train AAE and DAAE models with a latent dimension of 2, %to facilitate visualization of the learned representations.
so the learned representations can be drawn directly. 
Similar results were found using a higher latent dimension and visualizing the representations with t-SNE (see Appendix~\ref{sec:toy_experiment_dim5}).

In terms of its training objectives, the AAE appears very strong, achieving perfect reconstruction on all data points while keeping the adversarial loss around the maximum $-2\log 0.5$ (when $D$  always outputs probability 0.5). 
However, the left panel of Figure~\ref{fig:toy_exp} reveals that, although they are well separated in the data space, different clusters become mixed together in the learned latent space.
%For continuous data $x$, the continuity of neural networks can usually render representations $z$ to retain its structure.
%This is because for discrete $x$, such properties no longer hold, and two similar sequences in the $\mathcal X$ space can be mapped to points that are far away in the $\mathcal Z$ space.
This is because that for discrete objects like sequences, neural networks have the capacity to map similar data to distant latent representations.
With only the autoencoding and latent prior constraints, the AAE fails to learn proper latent space geometry that preserves the cluster structure of data.

% Let us now train our DAAE model. %on the synthetic clustered data previously described.
% It uses 
We now train our DAAE model using the same architecture as %the previous
the AAE, with perturbations $C$ that randomly flip each element of $x$ with probability $p=0.2$.\footnote{We observed similar results for $p=0.1, 0.2, 0.3$.}
The DAAE can also keep the adversarial loss close to its maximum, and can perfectly reconstruct the data at test time when $C$ is disabled, indicating that training is not hampered by our perturbations. Moreover, the DAAE latent space closely captures the underlying cluster structure in the data, as depicted in the right panel of Figure~\ref{fig:toy_exp}.
% By simply perturbing the inputs to similar sequences, their latent representations become similar.
By simply introducing local perturbations of inputs, we are able to  ensure similar sequences have similar representations in the trained autoencoder.

% Let us try the previous experiment with the DAAE model. We use perturbation $C$ that randomly flips each element of $x$ with probability $p$. We tried $p=0.1,0.2,0.3$ and they all have similar results. The data can be perfectly reconstructed, and the adversarial loss remains at maximum. The latent representations learned by DAAE are depicted in Figure~\ref{fig:toy_exp}, Right. With our simple denoising approach, the model now learns regular latent space geometry that reflects the cluster structure of data.

%When $p_C(\tilde x|x)=\mathds{1}[\tilde x=x]$ (i.e.\  there is no perturbation), the above simply becomes the usual AAE objective.
%, and alternatively behaves as the standard DAE model if we set $\lambda = 0$.
%\textcolor{red}{TODO: comment on SGD training and how many perturbations are introduced per example to approximate $\mathbb E_{p(x,\tilde x)}$ during training}
%In the next section, we provide a theoretical analysis of AAE with perturbed $x$ and show that it enjoys better properties than without input perturbations.
% Indeed, with perturbed $x$, the performance of AAE is significantly improved and is superior to other baselines, as we shall see in the experiments section.

\subsection{Theoretical Analysis}\label{sec:theory}

%The previous section provided intuitive arguments on how denoising leads to latent space organization.
In this section, we provide theoretical explanations for our previous findings.
We formally analyze which type of $x$-$z$ mappings will be learned by AAE and DAAE, respectively, to achieve global optimality of their training objectives. 
We show that a well-trained DAAE is guaranteed to learn neighborhood-preserving latent representations, whereas even a perfectly-trained AAE model may learn latent representations whose geometry fails to reflect similarity in the $\mathcal X$ space
%how input perturbations help better structure the latent space geometry 
(all proofs are relegated to the appendix).

%Throughout, we assume the encoder and decoder are universal function approximators capable of producing any possible $x, z$ mapping that pairs each $x_i$ with a unique $z_j$, with the sole Lipschitz constraint that applies to the decoder model.

%Unlike previous analyses of noise in single-layer networks~\citep{poole2014analyzing}, here 
We study high-capacity encoder/decoder networks with a large number of parameters, as is the case for modern sequence models~\citep{schafer2006recurrent, devlin2018bert,radford2019language}.
Throughout, we assume that:
% Alternative: Latent Space Structure
% Following prior analysis of language decoders~\citep{mueller2017sequence}, we assume a powerful decoder $G$ that is subject to the Lipschitz condition below, but can otherwise approximate arbitrary $p(x|z)$. 
% Following prior analysis of language decoders~\citep{mueller2017sequence}, we assume a powerful decoder $G$ that can approximate arbitrary $p(x|z)$, with a constraint of Lipschitz continuity as a function of $z$.
%we assume that our decoder $G$ is unconstrained except for Lipschitz continuity as a function of $z$.

\begin{assumption}
The encoder $E$ is unconstrained and capable of producing any mapping from $x$'s to $z$'s.
\end{assumption}
\begin{assumption}\label{asm:Lipschitz}
The decoder $G$ can approximate arbitrary $p(x|z)$ so long as it remains sufficiently Lipschitz continuous in $z$. Namely, there exists $L > 0$ such that all decoders $G$ obtainable via training satisfy: $\forall x \in \mathcal{X}, \forall z_1, z_2 \in \mathcal{Z}, \ |\log p_G(x|z_1)-\log p_G(x|z_2)| \le L \|z_1-z_2\|$. 
(We denote this set of decoders by $\mathcal G_L$.)
\end{assumption}
%\vspace{-5pt}

The latter assumption that $G$ is Lipschitz in its continuous input $z$ follows prior analysis of language decoders~\citep{mueller2017sequence}. 
% , we assume that $G$ is $L$-Lipschitz in its continuous input $z$.
When $G$ is implemented as a RNN or Transformer language model, $\log p_G(x|z)$ will remain Lipschitz in $z$ if the recurrent or attention weight matrices have bounded norm, which is naturally encouraged by regularization arising from explicit $\ell_2$ penalties and implicit effects of SGD training~\citep{zhang2017}.
% popular training methods that utilize SGD with early stopping and $L_2$ regularization~\citep{zhang2017}.
Note we have not assumed $E$ or $G$ is Lipschitz in $x$, which would be unreasonable since $x$ represents discrete text, and when a few symbols change, the decoder likelihood for the entire sequence can vary drastically (e.g.,\ $G$ may assign a much higher probability to a grammatical sentence than an ungrammatical one that only differs by one word). 
%Our discussion is directed to the nature of such families of log-likelihood functions with a continuous variable $z$ and a discrete variable $x$. 

We further assume an effectively trained discriminator that succeeds in its adversarial task:

\begin{assumption}\label{asm:discriminator}
The discriminator $D$ ensures that the latent encodings $z_1,\cdots,z_n$ of training examples $x_1,\cdots,x_n$
are indistinguishable from prior samples $z\sim p(z)$. 
\end{assumption}
%\vspace{-5pt}
%Under global optimality, the discriminator will ensure that the latent encodings $z_1,\cdots,z_n$ of training examples $x_1,\cdots,x_n$ resemble samples from the prior. 
%The remaining objective is to maximize $-\mathcal L_{\text{rec}}(\theta_E,\theta_G)$ in Eq.~\ref{eq:rec}.

In all the experiments we have done, training is very stable and the adversarial loss remains around $-2\log 0.5$, indicating that our assumption holds empirically.
Under Assumption~\ref{asm:discriminator}, we can directly suppose that $z_1,\cdots,z_n$ are actual samples from $p(z)$ which are fixed a priori. Here, the task of the encoder $E$ is to map the given training examples to the given latent points, and the goal of the decoder $p_G(\cdot|\cdot)$ is to maximize $-\mathcal L_{\text{rec}}$ under the encoder mapping.
The question now is which one-to-one mapping between $x$'s and $z$'s an optimal encoder/decoder will learn under the AAE objective (Eq.~\ref{eq:rec}) and DAAE objective (Eq.~\ref{eq:rec_denoise}), respectively.
%We start with the following observation about AAE:
%The analysis aims to highlight differences between optimal encoder/decoder solutions under the AAE and DAAE objective (Eq.~\ref{eq:rec} and Eq.~\ref{eq:rec_denoise}).

%Let $\mathcal G_L$ denote the set of possible decoder models subject to Assumption~\ref{asm:Lipschitz} with Lipschitz constant $L$, and $\sigma$ denote the sigmoid function.

\begin{restatable}{theorem}{badaae}\label{thm:badaae}
%Let $\mathcal{G}$ denote the set of possible decoder models subject to the Lipschitz Assumption~\ref{asm:Lipschitz}.
For any one-to-one encoder mapping $E$ from $\{x_1,\cdots,x_n\}$ to $\{z_1,\cdots,z_n\}$, the optimal value of objective $\max_{G\in\mathcal G_L} \frac{1}{n}\sum_{i=1}^n \log p_G(x_i|E(x_i))$ is the same.
%AAE can achieve the same optimal objective value for any one-to-one matching between $\{x_1,\dots,x_n\}$ and $\{z_1,\dots,z_n\}$.
\end{restatable}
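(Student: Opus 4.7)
The plan is a symmetry / relabeling argument: I would show that the decoder class $\mathcal G_L$ is closed under permutations of the discrete argument $x$, so any two bijective encoders $E, E'$ from $\{x_1,\dots,x_n\}$ onto $\{z_1,\dots,z_n\}$ can be transformed into one another without changing the attainable reconstruction value.

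Concretely, I would fix two such bijections $E$ and $E'$ and, for each $\varepsilon > 0$, choose a decoder $G^* \in \mathcal G_L$ whose value under $E$ is within $\varepsilon$ of the supremum. Define a permutation $\phi$ of $\mathcal X$ by $\phi(x_i) = x_j$ whenever $E(x_j) = E'(x_i)$, extended by the identity on the complement of $\{x_1,\dots,x_n\}$; this is well-defined because $E$ and $E'$ are bijections onto the same set $\{z_1,\dots,z_n\}$. I would then build a candidate decoder $G'$ for $E'$ by composition with this relabeling, setting $p_{G'}(x \mid z) := p_{G^*}(\phi(x) \mid z)$.

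Three short checks finish the argument. First, $p_{G'}(\cdot \mid z)$ is a valid probability distribution on $\mathcal X$ because $\phi$ merely reindexes the summation. Second, $G' \in \mathcal G_L$ since the Lipschitz bound on $\log p_{G^*}(y \mid \cdot)$ for $y=\phi(x)$ transfers verbatim to $\log p_{G'}(x \mid \cdot)$, as $\phi$ only touches the $x$-argument. Third, the defining relation $E'(x_i) = E(\phi(x_i))$ gives
\[
\sum_{i=1}^n \log p_{G'}(x_i \mid E'(x_i)) \;=\; \sum_{i=1}^n \log p_{G^*}(\phi(x_i) \mid E(\phi(x_i))) \;=\; \sum_{j=1}^n \log p_{G^*}(x_j \mid E(x_j)),
\]
after reindexing by the bijection $\phi$ restricted to $\{x_1,\dots,x_n\}$. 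Hence the supremum under $E'$ is at least the supremum under $E$ minus $\varepsilon$; letting $\varepsilon \to 0$ and exchanging the roles of $E$ and $E'$ yields equality.

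The main obstacle is interpretive rather than technical: one must notice that Assumption~\ref{asm:Lipschitz} constrains $G$ only in the $z$-variable, never in the discrete argument, so $\mathcal G_L$ is stable under arbitrary symbol-relabelings of $\mathcal X$; once that is in place the proof reduces to a one-line substitution. The punchline I would emphasize in the surrounding discussion is that, combined with Assumption~1's unconstrained encoder, this shows the AAE reconstruction objective provides \emph{no preference} among the $n!$ possible couplings between training sentences and latent samples, which is precisely the pathology depicted in the left panel of Figure~\ref{fig:illustration}.
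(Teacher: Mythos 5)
Your proposal is correct and is essentially the paper's own argument: both transfer a (near-)optimal decoder from one encoder matching to another by permuting the decoder's discrete argument with the relabeling $\phi = E^{-1}\circ E'$, noting that the Lipschitz constraint of Assumption~\ref{asm:Lipschitz} acts only in $z$ and is therefore preserved. Your write-up is marginally more careful (handling the supremum via $\varepsilon$ and explicitly checking normalization), but the substance is identical.
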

%\vspace{-5pt}

Intuitively, this result stems from the fact that the model receives no information about the structure of $x$, and $x_1,\cdots,x_n$ are simply provided as different symbols.
Hence AAE offers no preference over $x$-$z$ couplings,
%in the general scenario described above.
and a random matching in which the $z$ do not reflect any data structure is equally good as any other matching (Figure~\ref{fig:illustration}, Left).
%, and the AAE model is therefore unlikely to learn a useful latent space geometry (even if the $x$ are highly structured).
Latent point assignments start to differentiate, however, once we introduce local input perturbations. 

To elucidate how perturbations affect latent space geometry, it helps to first consider a simple setting with only four examples $x_1, x_2, x_3, x_4 \in \mathcal{X}$. Again, we consider given latent points $z_1, z_2, z_3, z_4$ sampled from $p(z)$, 
%(reflecting an effective discriminator),
and the encoder/decoder are tasked with learning which $x$ to match with which $z$.
As depicted in Figure~\ref{fig:illustration}, suppose there are two pairs of $x$ closer together and also two pairs of $z$ closer together. %whose distance is smaller than other pairwise $z$-distances.
Let $\sigma$ denote the sigmoid function, we have the following conclusion:

\iffalse
In this scenario, we let $\sigma(\cdot)$ denote the sigmoid function and quantitatively assume: 
$ \delta < \frac{1}{L} \left( 
2 \log \left[ 
\frac{1}{2} \sigma(L \zeta) 
\right] 
+ 3 \log 2
\right) $
with  
$ \zeta > - \frac{1}{L} \log( \sqrt{2} - 1) > 0$ 
(in order to ensure $\delta > 0$).
\fi

\begin{restatable}{theorem}{fourpoints}\label{thm:fourpoints}
Let $d$ be a distance metric over $\mathcal{X}$. Suppose $x_1,x_2,x_3,x_4$ satisfy that with some $\epsilon > 0$: $d(x_1, x_2) < \epsilon$, $d(x_3, x_4) < \epsilon$, and $d(x_i, x_j) > \epsilon$ for all other $(x_i, x_j)$ pairs.
In addition, $z_1,z_2,z_3,z_4$ satisfy that with some $0<\delta<\zeta$:
$\|z_1 -  z_2\| < \delta$, $\|z_3 - z_4\| < \delta$, and $\|z_i -  z_j \| > \zeta$ for all other $(z_i,z_j)$ pairs.
Suppose our perturbation process $C$ reflects local $\mathcal{X}$ geometry with: 
$p_C(x_i|x_j)=1/2$ if $d(x_i, x_j) < \epsilon$ and $=0$ otherwise. % ($1 \le i,j \le n=4$).
%Let $\mathcal{G}$ denote the set of possible decoder models subject to Assumption~\ref{asm:Lipschitz} with Lipschitz constant $L$.
For
$ \delta < 1/L\cdot \left( 
2 \log \left( 
\sigma(L \zeta) 
\right) 
+ \log 2
\right) $
and  
$ \zeta > 1/L\cdot \log\left(1/(\sqrt{2} - 1)\right)$,
the denoising objective $\max_{G\in\mathcal G_L} \frac{1}{n}\sum_{i=1}^n\sum_{j=1}^n p_C(x_j|x_i) \log p_G(x_i|E(x_j))$ (where $n=4$) achieves the largest value when encoder $E$ maps close pairs of $x$ to close pairs of $z$.
%the optimal encoder/decoder pair for the AAE with perturbed $x$ will always prefer a matching between $\{x_1,\dots, x_4\}$ and $\{z_1,\dots, z_4\}$ that ensures the close pairs of $x$ are encoded to close pairs of $z$.
\end{restatable}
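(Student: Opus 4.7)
By Assumption~\ref{asm:discriminator}, the image $\{E(x_1),\dots,E(x_4)\}$ can be identified with the fixed prior samples $\{z_1,\dots,z_4\}$, so $E$ is a bijection between them. The plan is to classify these $4!$ bijections into two structural types, compute the optimal inner decoder objective over $G\in\mathcal{G}_L$ in each type in closed form, and compare. Call $E$ \emph{aligned} if the two close $x$-pairs are sent to the two close $z$-pairs, and \emph{crossed} otherwise. The problem is symmetric under swapping within each close pair, so comparing one representative of each type suffices.

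For any fixed bijection, the only nonzero terms in the denoising objective have $p_C(x_j|x_i)=1/2$, so the outer sum decomposes cleanly over the four encoded points as $\tfrac{1}{2}\bigl[\log p_G(x_j|z_j)+\log p_G(x_{\pi(j)}|z_j)\bigr]$, where $\pi(j)$ denotes the close partner of $x_j$ and $z_j=E(x_j)$. Concavity of $\log$ forces each optimal $p_G(\cdot|z_k)$ to split its target mass equally between the two $x$'s it must reconstruct. I would then reduce to scalar variables $P_k=p_G(x_1|z_k)+p_G(x_2|z_k)$ and $Q_k=p_G(x_3|z_k)+p_G(x_4|z_k)$, subject to normalisation $P_k+Q_k\le 1$ and to Assumption~\ref{asm:Lipschitz} translated into the multiplicative bounds $P_j\ge P_i\,e^{-L\|z_i-z_j\|}$ and $Q_j\ge Q_i\,e^{-L\|z_i-z_j\|}$.

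In the aligned case the close-$z$ pairs decode the same cluster, so the close Lipschitz bounds are vacuous and the binding constraints come from the far $z$'s, yielding the coupled system $P_1+Q_3\,e^{-L\zeta}\le 1$ and $Q_3+P_1\,e^{-L\zeta}\le 1$, whose symmetric solution $P_1=Q_3=\sigma(L\zeta)$ gives objective value $4\log\sigma(L\zeta)-4\log 2$. In the crossed case the close-$z$ pairs decode different clusters, so $P_2\ge P_1\,e^{-L\delta}$ combines with $P_2+Q_2\le 1$ to yield $P_1+Q_2\,e^{-L\delta}\le 1$ and its symmetric twin; the symmetric optimum is $P_1=Q_2=\sigma(L\delta)$, with objective value $4\log\sigma(L\delta)-4\log 2$. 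Monotonicity of $\sigma$ together with $\zeta>\delta$ shows the aligned value strictly exceeds the crossed value, and the explicit numerical inequalities stated in the theorem supply enough slack to absorb any looseness in the intermediate bounds.

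The step I expect to require the most care is justifying that these symmetric configurations are in fact global optima of the inner decoder problem and not merely the best symmetric ones. Log-concavity of each per-$z_k$ summand together with the symmetry of the feasible polytope under swapping within each close pair makes the symmetric solutions plausible, but one must still rule out asymmetric optima in which, for instance, one $z_k$ concentrates more mass on its target cluster than its close partner, and one must verify that placing no decoder mass on sequences outside $\{x_1,\dots,x_4\}$ is indeed optimal. A secondary, more bookkeeping-heavy difficulty is keeping all six pairwise Lipschitz constraints simultaneously compatible: the $\delta<\zeta$ hypothesis is precisely what lets the close bounds be binding in the crossed case while the far bounds are binding in the aligned case.
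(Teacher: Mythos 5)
Your proposal is correct in substance and shares the paper's skeleton --- reduce to a choice between the ``aligned'' and ``crossed'' matchings and show the crossed one is strictly worse --- but your crossed-case estimate is genuinely different and sharper than the paper's. The paper lower-bounds the aligned case exactly as you do, via the explicit equal-split decoder $p_G(x_i|z_j)=\sigma(L\zeta)/2$ on same-cluster pairs, giving value $8\log\left(\sigma(L\zeta)/2\right)$; but for the crossed case it uses the cruder chain $\log p_G(x_i|z_j)\le L\delta+\log\bigl(1-\sum_{k\in B}p_G(x_k|z_\ell)\bigr)$ followed by the trivial assignment $p_G(x_k|z_\ell)=1/4$, arriving at the bound $4L\delta-12\log 2$. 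Comparing that to the aligned value is exactly what forces the theorem's numerical hypotheses on $\delta$ and $\zeta$. Your route --- Jensen within each cluster to pass to the aggregate masses $P_k,Q_k$, then the coupled constraints $P_1+Q_2e^{-L\delta}\le 1$ and $Q_2+P_1e^{-L\delta}\le 1$ to get $\log P_1+\log Q_2\le 2\log\sigma(L\delta)$ --- yields the crossed upper bound $8\log\left(\sigma(L\delta)/2\right)$, so the comparison reduces to $\sigma(L\zeta)>\sigma(L\delta)$, i.e.\ only $\delta<\zeta$ is needed; the stated conditions become superfluous. Two adjustments would tighten your write-up. First, the global-optimality worry you flag is a non-issue: you never need the symmetric points to be exact optima, only a one-sided bound in each case, and Jensen ($\log p_1+\log p_2\le 2\log\frac{p_1+p_2}{2}$) plus the summed per-$x$ Lipschitz inequalities give the crossed \emph{upper} bound unconditionally, while feasibility of the explicit construction gives the aligned \emph{lower} bound (check the ratio $\sigma(L\zeta)/\sigma(-L\zeta)=e^{L\zeta}\le e^{L\|z_j-z_\ell\|}$ for far pairs). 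Second, in the aligned case the constraints $P_1+Q_3e^{-L\zeta}\le1$ do not actually follow from Lipschitzness --- far $z$'s give \emph{looser} constraints, so $\sigma(L\zeta)$ is not the optimum of a constrained system but simply the value of one feasible decoder, which is all the argument requires.
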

%\vspace{-5pt}

% The detailed proof is in Appendix~\ref{sec:4pointproof}.
This entails that DAAE will always prefer to map similar $x$ to similar $z$.
Note that Theorem \ref{thm:badaae} still applies here, and AAE will not prefer any particular $x$-$z$ pairing over the other possibilities. 
We next generalize beyond the basic four-points scenario to consider $n$ examples of $x$ that are clustered, and ask whether this cluster organization will be reflected in the latent space of DAAE. %(corresponding to our previous synthetic experiments in Section~\ref{sec:toy_experiment}):% trained with local input perturbations.
%We have the following theorem:

% If perturbations are spread uniformly within each cluster, we have the following result: 

\begin{restatable}{theorem}{clusteredx}\label{thm:clusteredx}
Suppose $x_1,\cdots,x_n$ are divided into $n/K$ clusters of equal size $K$, with $S_i$ denoting the cluster index of $x_i$. Let the perturbation process $C$ be uniform within clusters, i.e. $p_C(x_i|x_j)=1/K$ if $S_i=S_j$ and $= 0$ otherwise.
%Let $\mathcal{G}$ denote the set of possible decoder models subject to Assumption~\ref{asm:Lipschitz} with Lipschitz constant $L$.
With a one-to-one encoder mapping $E$ from $\{x_1,\cdots,x_n\}$ to $\{z_1,\cdots,z_n\}$,
%For a one-to-one encoder mapping $E$ from $\{x_1,\cdots,x_n\}$ to $\{z_1,\cdots,z_n\}$,
the denoising objective $\max_{G\in\mathcal G_L} \frac{1}{n}\sum_{i=1}^n\sum_{j=1}^n p_C(x_j|x_i) \log p_G(x_i|E(x_j))$ has an upper bound: $\frac{1}{n^2} \sum_{i,j: S_i\ne S_j} \log \sigma(L \|E(x_i)-E(x_j)\|)-\log K$.
\end{restatable}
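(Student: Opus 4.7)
The plan is to first collapse the per-example log-likelihoods into a single ``per-cluster mass'' $s_j := \sum_{i\in S_j} p_G(x_i \mid E(x_j))$ at each latent point $z_j := E(x_j)$, and then use the Lipschitz property of $G$ to tie together these masses at pairs of latent points belonging to distinct clusters. Since $p_C$ is uniform within clusters, the denoising objective reduces to $\tfrac{1}{nK}\sum_j \sum_{i\in S_j}\log p_G(x_i \mid z_j)$, and a single application of Jensen's inequality on the inner sum (concavity of $\log$) turns it into $\tfrac{1}{n}\sum_j \log s_j - \log K$. This isolates the $-\log K$ term that appears in the bound ``for free'' and leaves the task of upper bounding $\tfrac{1}{n}\sum_j \log s_j$ by the sum of $\log \sigma(L\|z_i-z_j\|)$ over cross-cluster pairs.

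Next, I would derive a pair inequality linking $s_i$ and $s_j$ whenever $S_i\neq S_j$. Assumption~\ref{asm:Lipschitz} gives $\log p_G(x_k\mid z_j) \ge \log p_G(x_k\mid z_i) - L\|z_i-z_j\|$ for every $k$; summing over $k\in S_i$ and using disjointness $S_i\cap S_j = \emptyset$ together with $\sum_k p_G(x_k\mid z_j)\le 1$ yields $s_j + s_i\, e^{-L\|z_i-z_j\|}\le 1$, and the symmetric inequality follows by swapping the roles of $i$ and $j$. Adding the two and applying AM--GM then delivers $\log s_i + \log s_j\le 2\log \sigma(L\|z_i-z_j\|)$, which is exactly the sigmoid form that appears in the theorem statement.

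The final step is to average this pair bound over all ordered $(i,j)$ with $S_i\neq S_j$. Each index $j$ contributes $(n-K)$ copies of $\log s_j$ to the left-hand side, giving $\sum_j \log s_j \le (n-K)^{-1}\sum_{i,j:\,S_i\neq S_j}\log \sigma(L\|z_i-z_j\|)$. Dividing by $n$ and using $\log \sigma\le 0$ together with $(n(n-K))^{-1}\ge n^{-2}$ lets me weaken the coefficient in the favorable direction (both sides nonpositive), producing the announced $n^{-2}$ bound after combining with the $-\log K$ contribution from the Jensen step.

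The main obstacle I anticipate is obtaining a clean symmetric constraint between $s_i$ and $s_j$, because the Lipschitz assumption only controls $\log p_G$ between latent points and offers no direct control over the discrete $x$-dependence. The key move is to anchor the Lipschitz lower bound at $z_i$ rather than at the natural per-summand choice $z_k$: this allows the entire cluster mass $s_i$ to be transported across the gap $\|z_i-z_j\|$ in a single step, after which the rest of the argument is convex-geometry bookkeeping (AM--GM together with a counting argument), and the small slack between $(n(n-K))^{-1}$ and $n^{-2}$ is what the symmetric form in the theorem statement absorbs.
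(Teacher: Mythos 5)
Your proposal is correct and follows essentially the same route as the paper: both reduce the objective to $\tfrac{1}{n}\sum_j \log(\text{cluster mass at } z_j) - \log K$ (your direct Jensen step is the paper's ``optimal decoder is uniform within clusters'' argument in disguise), and both derive the identical pairwise constraint $\log s_i + \log s_j \le 2\log\sigma(L\|z_i-z_j\|)$ for cross-cluster pairs from the Lipschitz assumption before aggregating. Your ordered-pair counting even yields the slightly tighter coefficient $\tfrac{1}{n(n-K)}$, which you correctly relax to $\tfrac{1}{n^2}$ using $\log\sigma\le 0$, whereas the paper symmetrizes over all $n^2$ pairs directly.
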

%\vspace{-5pt}
% The proof is in Appedix~\ref{sec:proof}.

% Akin to the evidence lower bound from variational inference, 
Theorem~\ref{thm:clusteredx} 
% strictly speaking 
provides an upper bound of the DAAE objective that can be achieved by a particular $x$-$z$ mapping.
This achievable limit is substantially better when examples in the same cluster are mapped to the latent space in a manner that is well-separated from encodings of other clusters. In other words, by preserving input space cluster structure in the latent space, DAAE can achieve better objective values and thus is incentivized to learn such encoder/decoder mappings. 
An analogous corollary can be shown for the case when examples $x$ are perturbed to yield additional inputs $\tilde x$ not present in the training data.
%, even when those perturbations do not overlap with those of other examples.
In this case, the model would aim to collectively map each example and its perturbations to a compact group of $z$ points well-separated from other groups in the latent space.
% In this case, the example and its perturbations as a group would be favorably mapped to a compact group of points well-separated from other groups in the latent space.

%Empirical experiments in Section \ref{sec:neighborhood} confirm that our theory holds in practice.

Our synthetic experiments in Section~\ref{sec:toy_experiment} confirm that DAAE maintains the cluster structure of sequence data in its latent space. While these are simulated data, we note natural language often exhibits cluster structure based on topics/authorship but also contains far richer syntactic and semantic structures. In the next section, we empirically study the performance of DAAE on real text data.

\section{Experiments}

We test our proposed model and other text autoencoders on two benchmark datasets: \emph{Yelp reviews} and \emph{Yahoo answers}~\citep{shen2017style,yang2017improved}.
We analyze their latent space geometry, generation and reconstruction capacities, and applications to controllable text generation. 
%In Section~\ref{sec:neighborhood}, we extend the previous study on latent space geometry to real text data.
%In Section~\ref{sec:gen_rec_tradeoff}, we provide a systematic evaluation of the generation and reconstruction capabilities of different models.
%In Section~\ref{sec:style_transfer} and \ref{sec:sentence_interpolation}, we apply autoencoders to controllable text generation for style transfer and sentence interpolation.
All models are implemented using the same architecture.
Hyperparameters are set to values that produce the best overall generative models (see Section~\ref{sec:gen_rec_tradeoff}).
Detailed descriptions of training settings, human evaluations, and additional results/examples can be found in appendix.

\paragraph{Datasets}
The Yelp dataset is from~\citet{shen2017style}, which has 444K/63K/127K sentences of less than 16 words in length as train/dev/test sets, with a vocabulary of 10K. It was originally divided into positive and negative sentences for style transfer between them.
Here we discard the sentiment label and let the model learn from all sentences indiscriminately.
Our second dataset of Yahoo answers is from~\citet{yang2017improved}. It was originally document-level.
We perform sentence segmentation and keep sentences with length from 2 to 50 words. The resulting dataset has 495K/49K/50K sentences for train/dev/test sets, with vocabulary size 20K.

\paragraph{Perturbation Process} We randomly delete each word with probability $p$, so that perturbations of sentences with more words in common will have a larger overlap.
We also tried replacing each word with a <mask> token or a random word from the vocabulary.
We found that all variants have similar generation-reconstruction trade-off curves; in terms of neighborhood preservation, they are all better than other autoencoders, but word deletion has the highest recall rate. This may be because word replacement cannot perturb sentences of different lengths to each other even if they are similar.
Defining sentence similarity and meaningful perturbations are task specific.
Here, we demonstrate that even the simplest word deletions can bring significant improvements.
We leave it to future work to explore more sophisticated text perturbations.

\paragraph{Baselines} We compare our proposed DAAE with four alternative text autoencoders:
%introduced in the related work section above:  
AAE~\citep{makhzani2015adversarial}, latent-noising AAE~\citep[LAAE]{rubenstein2018latent},
adversarially regularized autoencoder~\citep[ARAE]{zhao2018adversarially},
%where the prior is implicitly transformed from a latent code generator,
and $\beta$-VAE~\citep{higgins2017beta}.
%, and purely reconstruction-focused  autoencoder (AE).
Similar to our model, the LAAE uses Gaussian perturbations in the latent space (rather than perturbations in the sentence space) to improve AAE’s latent geometry.
However, it requires enforcing an $L_1$ penalty (${\lambda_1\cdot \|\log \sigma^2(x)\|_1}$) on the latent perturbations' log-variance to prevent them from vanishing.  
In contrast, input perturbations in DAAE enable  stochastic latent representations  without parametric restrictions like Gaussianity.% or the training instability associated with nondeterministic encoder models.

%\paragraph{Experimental Details} See Appendix~\ref{sec:expdetail} for descriptions of hyperparameters and training regime.

\begin{table*}[t]
    \def\arraystretch{1.1}\setlength{\tabcolsep}{3pt}
    \fontsize{8.3}{10}\selectfont
    \centering
    \begin{tabular}{lll}
            \toprule
            & AAE & DAAE\\
            \midrule
            \textbf{Source} & \textbf{my waitress katie was fantastic , attentive and personable .} & \textbf{my waitress katie was fantastic , attentive and personable .}\\
            & my cashier did not smile , barely said hello . & the manager , linda , was very very attentive and personable .\\
            & the service is fantastic , the food is great . & stylist brenda was very friendly , attentive and professional .\\
            & the employees are extremely nice and helpful . & the manager was also super nice and personable .\\
            & our server kaitlyn was also very attentive and pleasant . & my server alicia was so sweet and attentive . \\
            & the crab po boy was also bland and forgettable . & our waitress ms. taylor was amazing and very knowledgeable .\\
            \midrule
            \textbf{Source} & \textbf{i have been known to eat two meals a day here .} & \textbf{i have been known to eat two meals a day here .}\\
             & i have eaten here for \_num\_ years and never had a bad meal ever . & you can seriously eat one meal a day here .\\
             & i love this joint . & i was really pleased with our experience here .\\
             & i have no desire to ever have it again . & ive been coming here for years and always have a good experience .\\
             & you do n't need to have every possible dish on the menu . & i have gone to this place for happy hour for years .\\
             & i love this arena . & we had \_num\_ ayce dinner buffets for \_num\_ on a tuesday night .\\
           \bottomrule
    \end{tabular}
    %\vspace{-5pt}
    \caption{\small Examples of 5 nearest neighbors in the latent Euclidean space of AAE and DAAE on the Yelp dataset.}
    \label{tab:examples_nn_yelp}
\end{table*}

\begin{figure}[t]
  \begin{center}
    \includegraphics[width=0.4\textwidth]{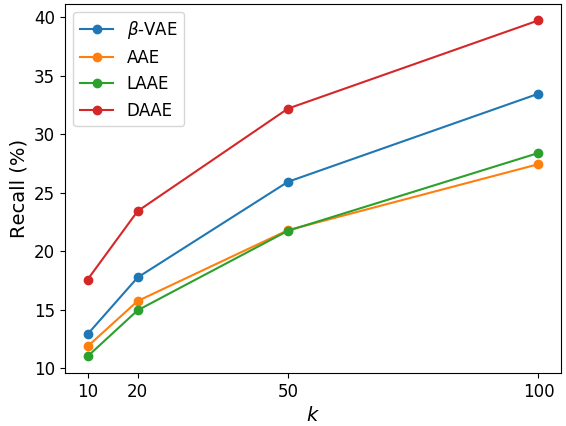}
  \end{center}
  %\vspace{-5pt}
  \caption{\small %Recall rate on the Yelp dataset of 10 nearest neighbors in the sentence space retrieved by $k$ nearest neighbors in the latent space under different models. ARAE is not plotted here as we find its recall significantly below other models (< 1\%).
   % Recall rate of 10 nearest neighbors in the sentence space retrieved by $k$ nearest neighbors in the latent space on Yelp. ARAE is not plotted here as we find its recall significantly below other models (< 1\%). %\protect\footnotemark
   Recall rate of different autoencoders on the Yelp dataset.
   Quantifying how well the latent geometry preserves text similarity, recall is defined as the fraction of each sentence's 10 nearest neighbors in terms of normalized edit distance whose representations lie among the $k$ nearest neighbors in the latent space.
   ARAE has a poor recall $< 1\%$ and thus not shown in the plot.
   %ARAE is not plotted here as we find its recall significantly below other models (< 1\%).
  }
  %\vspace{-5pt}
  \label{fig:recall_yelp}
\end{figure}

\subsection{Neighborhood Preservation}\label{sec:neighborhood}

We begin by investigating whether input perturbations will induce latent space organization that better preserves neighborhood structure in the sentence space.
Under our word-dropout perturbation process, sentences with more words in common are more likely to be perturbed into one another. This choice of $C$ approximately encodes sentence similarity via normalized edit distance.\footnote{Normalized edit distance $\in [0,1]$ is the Levenshtein distance divided by  the max length of two sentences.}
%Our goal is to empirically gauge the extent to which similar sentences are mapped to similar latent points by trained autoencoder models.
Thus, within the test set, we find both the 10 nearest neighbors of each sentence based on the normalized edit distance (denote this set by $\text{NN}_{x}$), as well as the $k$ nearest neighbors based on Euclidean distance between latent representations (denote this set by $\text{NN}_{z}$). %For $k = 10, 20, 50, 100$: 
We compute the recall rate $|\text{NN}_{x} \cap \text{NN}_{z}|~ /~ |\text{NN}_{x}|$,
which indicates how well local neighborhoods are preserved in the latent space of different models.

%We begin by empirically investigating whether our previous theory holds in practice. That is, in actual autoencoder models trained on real text datasets, do sentence perturbations induce latent space organization that better preserves neighborhood structure in the data space?

%We take the normalized edit distance\footnote{Normalized edit distance $\in [0,1]$ is the Levenshtein distance divided by  the max length of two sentences.} as a measure of sentence similarity, in line with our word-drop perturbation process where sentences with more words in common are more likely to be perturbed into one another.

%For each test sentence, we find its 10 nearest neighbors in the test set based on normalized edit distance. We also find its 10, 20, 50, 100 nearest neighbors in terms of latent euclidean distance under different text autoencoders, and compute the \emph{recall rate} (defined as ... \textcolor{red}{TODO: define recall rate. define \#Outputs in Fig 3. justify why this is good evaluation metric, better than say correlation because it focuses on local neighborhoods}
Figure~\ref{fig:recall_yelp} %and~\ref{fig:recall_yahoo}
shows that DAAE consistently gives the highest recall, about 1.5$\sim$2 times that of AAE, implying that input perturbations have a substantial effect on shaping the latent space geometry. % in practice.
Table~\ref{tab:examples_nn_yelp}
presents the five nearest neighbors found by AAE and DAAE in their latent space for example test set sentences. The AAE sometimes encodes entirely unrelated sentences close together, while the latent space geometry of the DAAE is structured based on key words such as ``attentive'' and ``personable'', and tends to group sentences with similar semantics close together.
These findings are consistent with our previous conclusions in Section~\ref{sec:geometry}.

\iffalse % OLD empirical similarity analysis
considers sentences with more words in common to be more similar and assigns them higher perturbation overlap probabilities, we take normalized edit distance\footnote{Normalized edit distance is edit distance divided by the max length of two sentences, ranging from 0 to 1.} for sentence similarity measure as an approximation to it.
We examine the extent to which similar sentences are mapped to similar latent points. Specifically,
for a test sentence, we find its 10 nearest neighbors in the test set in terms of normalized edit distance. We also find its 10, 20, 50, 100 nearest neighbors in terms of latent Euclidean distance under different text autoencoders, and compute the recall rate. From Figure~\ref{fig:recall_yelp}, we see that DAAE consistently gives the highest recall, about 1.5$\sim$2 times that of AAE. Table~\ref{tab:examples_nn_yelp} presents examples of nearest neighbors found by AAE and DAAE in their latent space. We notice that AAE sometimes gives irrelevant sentences, while DAAE can stick to key words such as ``attentive'' and ``personable'', and can find sentences with similar semantics.
\fi

\subsection{Generation-Reconstruction Trade-off}\label{sec:gen_rec_tradeoff}

\begin{figure}[t]
\centering
\includegraphics[width=0.45\textwidth]{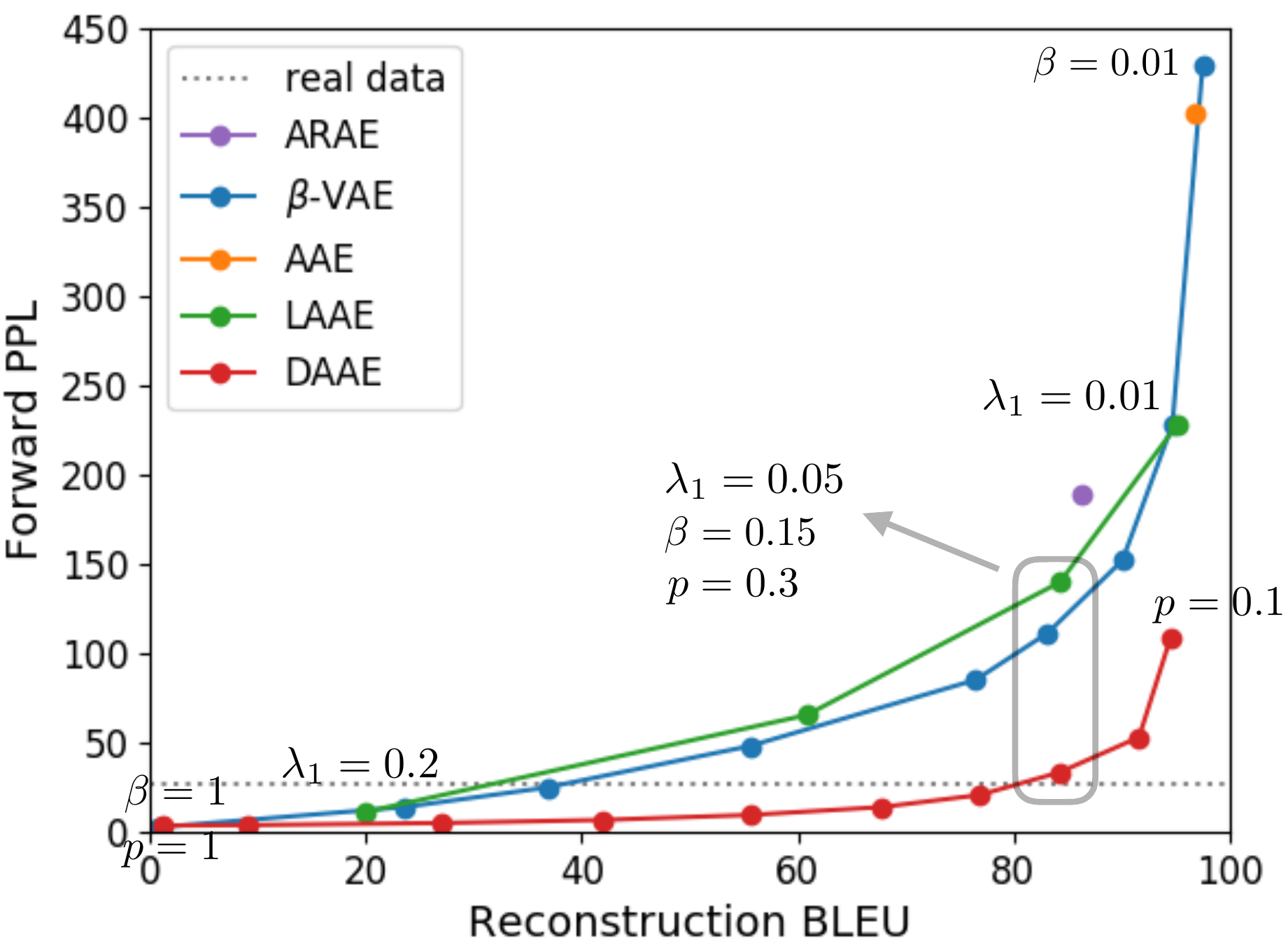}
\includegraphics[width=0.45\textwidth]{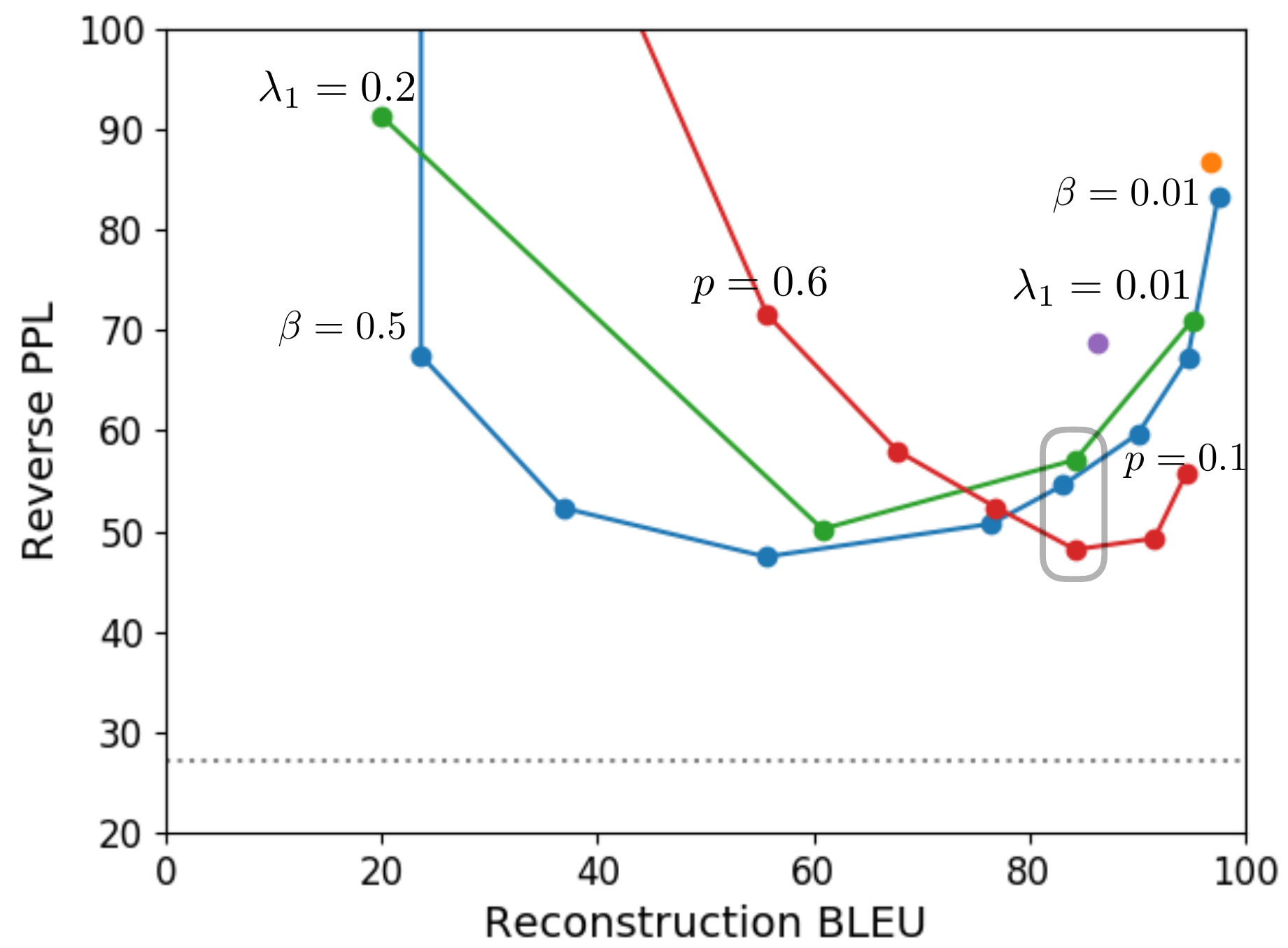}
%\vspace{-5pt}
\caption{\small Generation-reconstruction trade-off of various text autoencoders on the Yelp dataset.
The ``real data'' line marks the PPL of a language model trained and evaluated on real data.
We strive to approach the lower right corner with both high BLEU and low PPL.
%The KL coefficient $\beta$ of $\beta$-VAE, log-variance penalty $\lambda_1$ of LAAE, and word drop probability $p$ of DAAE are swept to show the full spectrum of model trade-offs.
The grey box identifies hyperparameters we finalize for respective models.
Points of severe collapse (Reverse PPL > 200) are removed from the lower panel.
%\textcolor{red}{TODO: use rest of this line to comment on gray box}
%Our proposed model AAE with perturbed $x$ provides the best trade-off that can achieve both high reconstruction BLEU and low PPL/reverse PPL.
}
%\vspace{-5pt}
 \label{fig:gen_rec_yelp}
\end{figure}

In this section, we evaluate various generative autoencoders in terms of both generation quality and reconstruction accuracy. A strong model should not only generate high quality sentences, but also learn useful latent representations that capture significant data content.
%This latter property implies the latent codes should be able to reconstruct input sentences with high accuracy when the model is used as an autoencoder.
Recent work on text autoencoders has found an inherent tension between these aims~\citep{bowman2015generating}, yet only when both goals are met can we successfully manipulate sentences by modifying their latent representation (in order to produce valid output sentences that retain the semantics of 
% remain faithful to
 the input).

We compute the BLEU score~\citep{papineni2002bleu} between input and reconstructed sentences to measure reconstruction accuracy, and compute Forward/Reverse PPL to measure sentence generation quality~\citep{zhao2018adversarially,cifka2018eval}.\footnote{
% Several papers use importance sampling estimation of data likelihood to evaluate VAEs~\citep{he2019lagging}. However, our DAAE is trained towards Wasserstein distance instead of likelihood, and its
While some use importance sampling estimates of data likelihood to evaluate VAEs~\citep{he2019lagging}, adopting the encoder as a proposal density is not suited for AAE variants, as they are optimized based on Wasserstein distances rather than likelihoods and lack closed-form posteriors.} 
Forward PPL is the perplexity of a language model trained on real data and evaluated on generated data. It measures the fluency of the generated text, but cannot detect the collapsed case where the model repeatedly generates a few common sentences. Reverse PPL is the perplexity of a language model trained on generated data and evaluated on real data. It takes into account both the fluency and diversity of the generated text. If a model generates only a few common sentences, a language model trained on it will exhibit poor PPL on real data.

We thoroughly investigate the performance of different models and the trade-off between generation and reconstruction.
Figure~\ref{fig:gen_rec_yelp} plots reconstruction BLEU (higher is better) vs. Forward/Reverse PPL (lower is better). The lower right corner indicates an ideal situation where good reconstruction accuracy and generation quality are both achieved.
For models with tunable hyperparameters, we sweep the full spectrum of their  generation-reconstruction trade-off by varying the KL coefficient $\beta$ of $\beta$-VAE, the log-variance $L_1$ penalty $\lambda_1$ of LAAE, and the word drop probability $p$ of DAAE.\footnote{We also studied the VAE with word dropout on the decoder side proposed by \citet{bowman2015generating}, but found that it exhibited poor reconstruction over all settings of the dropout parameter (best BLEU $= 12.8$ with dropout rate $= 0.7$). Thus this model is omitted from our other analyses.}

In the upper panel, we observe that a standard VAE ($\beta=1$) completely collapses and ignores the latent variable $z$, resulting in a reconstruction BLEU close to 0. At the other extreme, AAE can achieve near-perfect reconstruction, but its latent space is highly non-smooth and generated sentences are of poor quality, indicated by its large Forward PPL. Decreasing $\beta$ in VAE or introducing latent noises in AAE provides the model with a similar trade-off curve between reconstruction and generation.
We note that ARAE falls on or above their curves, revealing that it does not fare better than these methods (\citet{cifka2018eval} also reported similar findings).
Our proposed DAAE provides a trade-off curve that is strictly superior to other models.
%In contrast to $\beta$-VAE and LAAE that employ Gaussian perturbations in the latent space, it can learn to map input perturbations to any desired latent posterior distribution.
With discrete $x$ and a complex encoder, the Gaussian perturbations added to the latent space by $\beta$-VAE and LAAE are not directly related to how the inputs are encoded. In contrast, input perturbations added by DAAE can constrain the encoder to maintain coherence between neighboring inputs in an end-to-end fashion and help learn smoother latent space.

%It provides a trade-off curve that is strictly superior to other models. \textcolor{red}{explain more why input space noise is better than latent space noise}

The lower panel in Figure~\ref{fig:gen_rec_yelp}  illustrates that Reverse PPL first drops and then rises as we increase the degree of regularization/perturbation. This is because when $z$ encodes little information, generations from prior-sampled $z$ lack enough diversity to cover the real data.
Again, DAAE outperforms other models that tend to have higher PPL and lower BLEU. 

Based on these results, we set $\beta=0.15$ for $\beta$-VAE, $\lambda_1=0.05$ for LAAE, and $p=0.3$ for DAAE in the neighborhood preservation and text manipulation experiments, to ensure they have strong  reconstruction abilities and encode enough information about data.

% subsection{Applications}
%\subsection{Applications to Controllable Text Generation}

%We now apply text autoencoders to manipulate text through modifying their latent representations.%, and examine its performance compared to other text autoencoders.
% Next, we apply our autoencoders to the task of  controllable generation, where the goal is to manipulate text through simple modifications of  sentences' latent representations. 

\subsection{Style Transfer via Latent Vector Arithmetic}\label{sec:style_transfer}

\citet{mikolov2013linguistic} previously discovered that
word embeddings from unsupervised learning can capture linguistic relationships via simple arithmetic. A canonical example is the embedding arithmetic ``King'' - ``Man'' + ``Woman''
 $\approx$ ``Queen''.
% resulting in a vector close to the embedding of  ``Queen''.
%\citet{radford2015unsupervised} showed similar arithmetic also applies to visual concepts, where simple addition of ``glasses'' and ``turning'' vectors to representations of face images were able to change their appearance and pose.
Here, we use the Yelp dataset with tense and sentiment as two example attributes~\citep{hu2017toward,shen2017style} to investigate whether analogous structure emerges in the latent space of our sentence-level models.
%(which would indicate their  utility for controllable text generation via simple latent space operations).

\paragraph{Tense}
We use the Stanford Parser
%\footnote{\url{https://nlp.stanford.edu/software/srparser.html}}
to extract the main verb of a sentence and determine the sentence tense based on its part-of-speech tag. We compute a single ``tense vector'' by averaging the latent code $z$ separately for 100 (non-parallel) past tense sentences and present tense sentences in the development set, and then calculating the difference between the two. Given a sentence from the test set, we attempt to change its tense from past to present or from present to past through simple addition/subtraction of the tense vector. More precisely, a source sentence $x$ is first is encoded to $z = E(x)$, and then the tense-modified sentence is produced via $G(z \pm v)$, where $v \in \mathbb{R}^d$ denotes the fixed tense vector.
% Next, we add this tense vector to the encodings of test sentences and subsequently decode from the new latent representations in order to change their tense from past to present or from present to past.

To quantitatively compare different models, we compute their tense transfer accuracy as measured by the parser, output BLEU with the input sentence, and output (forward) PPL evaluated by a language model. DAAE achieves the highest accuracy, lowest PPL, and relatively high BLEU (Table~\ref{tab:eval_tense}, Above), indicating that the output sentences produced by our model are more likely to be of high quality and of the proper tense, while remaining similar to the source sentence.
A human evaluation on 200 test sentences (100 past and 100 present, details in Appendix~\ref{sec:humaneval}) suggests that DAAE outperforms $\beta$-VAE twice as often as it is outperformed, 
% From Table~\ref{tab:eval_tense} (Below), one can see that DAAE outperforms $\beta$-VAE twice as often as it is outperformed.
% We also conduct human evaluation on 200 test sentences (100 past and 100 present) to compare our model and $\beta$-VAE, the closest baseline model.
% From Table~\ref{tab:eval_tense} (Below), one can see that DAAE outperforms $\beta$-VAE twice as often as it is outperformed. 
% Table~\ref{tab:human_eval_tense} indicates the number of times AAE with perturbed $x$ is better than $\beta$-VAE is twice the number of times $\beta$-VAE is better than AAE with perturbed $x$. 
%Based on human evaluation of the tense-inversion task, AAE with perturbed $x$
and it successfully inverts tense for  $(48+26)/(200-34)=44.6\%$ of sentences, 13.8\% more than $\beta$-VAE (Table~\ref{tab:eval_tense}, Below).  
% Table~\ref{tab:examples_tense} shows the outputs of four example sentences generated by different models through this simple latent vector offset. 
Table~\ref{tab:examples_tense} shows the results of adding or subtracting this fixed latent vector offset under different models. 
% for a given sentence.
DAAE properly changes ``enjoy'' to ``enjoyed'' or the subjunctive mood to declarative mood. %and adjust the word order.
Other baselines either fail to alter the tense, or undesirably  change the semantic meaning of the source sentence (e.g.\ ``enjoy'' to ``made'').

\begin{table}[t]
\centering
\begin{tabular}{lccc}
\toprule
\textbf{Model} & \textbf{ACC} & \textbf{BLEU} & \textbf{PPL} \\
\midrule
%AE &  24.6 &	78.2 &	39.9\\
ARAE &  17.2 &	55.7 &	59.1\\
$\beta$-VAE	&  49.0 &	43.5 &	44.4 \\
AAE &  9.7 & \bf 82.2 &	37.4\\
LAAE &  43.6 &	37.5 &	55.8\\
DAAE &  \bf 50.3 &	54.3 & \bf	32.0 \\
\bottomrule
\end{tabular}

\begin{tabular}{cccccc}
\\
\toprule
$\beta$-VAE is better: 25
\ \ \ DAAE is better: \textbf{48}
\\
\midrule
both good: 26 \hspace*{10pt} both bad: 67 \hspace*{10pt} n/a: 34 
\\ 
\bottomrule
\end{tabular}

\caption{\small
Above: automatic evaluations of vector arithmetic for tense inversion.
Below: human evaluation statistics of our model vs.\ the closest baseline $\beta$-VAE.
%Human evaluation of AAE with perturbed $x$ vs. $\beta$-VAE for tense inversion. ``n/a''---not applicable, ``G''---both good, ``B''---both bad, ``>''---AAE with perturbed $x$ is better, ``<''---$\beta$-VAE is better. 
} \label{tab:eval_tense}
\end{table}

\paragraph{Sentiment}
Following the same procedure to alter tense, we compute a ``sentiment vector'' $v$ from 100 negative and positive sentences and use it to change the sentiment of test sentences.
%We repeat a similar analysis on sentiment by computing a ``sentiment vector'' $v$ from 100 negative and positive sentences. Then we apply $v$ to test sentences to change their sentiment, following the same previously described procedure used to alter tense.
Table~\ref{tab:res_sentiment} reports automatic evaluations, and Table~\ref{tab:examples_sentiment} shows examples generated by AAE and DAAE.
%Sentiment seems to be less salient than tense, and $\pm v$ does not appear to invert the sentiment of a sentence very effectively. Thus,
Scaling $\pm v$ to $\pm 1.5v$ and $\pm 2v$, we find that resulting sentences get more and more positive/negative.
However, the PPL for AAE increases rapidly with the scaling factor, indicating that the sentences become unnatural when their encodings have a large offset. DAAE enjoys a much smoother latent space than AAE. 
At this challenging \emph{zero-shot} setting where
no style labels are provided during training,
%Despite the fact that no sentiment labels are provided during training (a more challenging task than previous style transfer models),
DAAE with $\pm 1.5v$ is able to transfer sentiment fairly well.

\begin{table}[t]
\centering  
\begin{tabular}{ll|ccc} 
\hline 
\textbf{Model} &  & \textbf{ACC} & \textbf{BLEU} & \textbf{PPL} \TBstrut  \\
\hline 
\multicolumn{2}{l|}{\citet{shen2017style}} & 81.7 & 12.4 & 38.4 \TBstrut \\
\hline
\multirow{3}{*}{AAE} & $\pm v$ & 7.2 &	86.0 & 33.7 \\
 & $\pm 1.5v$ & 25.1 &	59.6 &	59.5 \\
 & $\pm 2v$ & 57.5 & 27.4 & 139.8 \\
\hline
\multirow{3}{*}{DAAE} & $\pm v$ & 36.2 &	40.9 &	40.0 \\
 & $\pm 1.5v$ & 73.6 & 18.2 & 54.1 \\
 & $\pm 2v$ & 91.8 & 7.3 & 61.8 \Bstrut 
 \\
\toprule
\end{tabular}  
%\vspace{-5pt}
\caption{\small Automatic evaluations of vector arithmetic for sentiment transfer. Accuracy (ACC) is measured by a sentiment classifier. The model of \citet{shen2017style} is specifically trained for sentiment transfer with labeled data, while our text autoencoders are not.}\label{tab:res_sentiment}
%\vspace{-5pt}
\end{table}  

\subsection{Sentence Interpolation via Latent Space Traversal}\label{sec:sentence_interpolation}

%Our final experiments 
We also
study sentence interpolation by traversing  the latent space of text autoencoders. Given two input sentences, we encode them to $z_1,z_2$ and decode from  $tz_1+(1-t)z_2~(0\le t\le 1)$. % to obtain interpolated sentences.
Ideally, this should produce fluent sentences with gradual semantic change. %~\citep{bowman2015generating}. 
Table~\ref{tab:examples_lerp_yelp}
shows two examples from the Yelp dataset, where it is clear that DAAE produces more coherent and natural interpolations than AAE. 
% We can see that AAE with perturbed $x$ succeeds at this task whereas AAE fails to generate meaningful intermediate sentences. 
Table~\ref{tab:examples_lerp_yahoo} in the appendix shows two difficult examples from the Yahoo dataset, where we interpolate between dissimilar sentences. While it is challenging to generate semantically correct sentences in these cases, the latent space of our model exhibits continuity on topic and syntactic structure.

% horizontal format
\begin{table*}[p] 
    \def\arraystretch{1.1}\setlength{\tabcolsep}{3pt}
    \fontsize{8.3}{10}\selectfont
    \centering
    \begin{tabular}{lll}
           \toprule
           \textbf{Input} & \textbf{i enjoy hanging out in their hookah lounge .} & \textbf{had they informed me of the charge i would n't have waited .} \\
           ARAE & i enjoy hanging out in their 25th lounge . & amazing egg of the may i actually !\\
           $\beta$-VAE & i made up out in the backyard springs salad . & had they help me of the charge i would n't have waited .\\
           AAE & i enjoy hanging out in their brooklyn lounge . & have they informed me of the charge i would n't have waited .\\
           LAAE & i enjoy hanging out in the customized and play . & they are girl ( the number so i would n't be forever .\\
           DAAE & i enjoyed hanging out in their hookah lounge . & they have informed me of the charge i have n't waited .\\
           \toprule  
    \end{tabular}
    \caption{\small Examples of vector arithmetic for tense inversion.}
    \label{tab:examples_tense}
\end{table*}

% vertical format
\iffalse
\begin{table}[p] 
    \def\arraystretch{1.1}\setlength{\tabcolsep}{3pt}
    \fontsize{7.5}{9}\selectfont
    \centering
    \begin{tabular}{ll}
           \toprule
           \textbf{Input} & \textbf{i enjoy hanging out in their hookah lounge .} \\
           ARAE & i enjoy hanging out in their 25th lounge . \\
           $\beta$-VAE & i made up out in the backyard springs salad . \\
           AAE  & i enjoy hanging out in their brooklyn lounge . \\
           LAAE & i enjoy hanging out in the customized and play . \\
           DAAE & i enjoyed hanging out in their hookah lounge . \\
           \midrule
           \textbf{Input} & \textbf{had they informed me of the charge i would n't have waited .} \\
           ARAE & amazing egg of the may i actually !\\
           $\beta$-VAE & had they help me of the charge i would n't have waited .\\
           AAE  & have they informed me of the charge i would n't have waited .\\
           LAAE & they are girl ( the number so i would n't be forever .\\
           DAAE & they have informed me of the charge i have n't waited .\\
           \toprule  
    \end{tabular}
    \caption{\small Examples of vector arithmetic for tense inversion.}
    %\vspace{-5pt}
    \label{tab:examples_tense}
\end{table}
\fi

\begin{table*}[p]
    \def\arraystretch{1.1}\setlength{\tabcolsep}{3pt}
    \fontsize{8.3}{10}\selectfont
    \centering
    \begin{tabular}{lll}
            \toprule
            & AAE & DAAE\\
            \midrule
            \textbf{Input} & \textbf{the food is entirely tasteless and slimy .} & \textbf{the food is entirely tasteless and slimy .}\\
            $+v$ & the food is entirely tasteless and slimy . & the food is tremendous and fresh .\\
            $+1.5v$ & the food is entirely tasteless and slimy . & the food is sensational and fresh .\\
            $+2v$ & the food is entirely and beef . & the food is gigantic . \\
            \midrule
            \textbf{Input} & \textbf{i really love the authentic food and will come back again .} & \textbf{i really love the authentic food and will come back again .}\\
            $-v$ & i really love the authentic food and will come back again . & i really love the authentic food and will never come back again .\\
            $-1.5v$ & i really but the authentic food and will come back again . & i really do not like the food and will never come back again .\\
            $-2v$ & i really but the worst food but will never come back again . & i really did not believe the pretentious service and will never go back .\\
           \toprule
    \end{tabular}
    %\vspace{-5pt}
    \caption{\small Examples of vector arithmetic for sentiment transfer.}
    \label{tab:examples_sentiment}
\end{table*}

% horizontal format
\begin{table*}[p]
    \def\arraystretch{1.1}\setlength{\tabcolsep}{3pt}
    \fontsize{8.3}{10}\selectfont
    \centering
    \begin{tabular}{lll}
            \toprule
            \textbf{Input 1} & \textbf{it 's so much better than the other chinese food places in this area .} & \textbf{fried dumplings are a must .}\\
            \textbf{Input 2} & \textbf{better than other places .} & \textbf{the fried dumplings are a must if you ever visit this place .}\\
            \midrule
            AAE & it 's so much better than the other chinese food places in this area . & fried dumplings are a must .\\
            & it 's so much better than the other food places in this area . & fried dumplings are a must .\\
            & better , much better . & the dumplings are a must if you worst .\\
            & better than other places . & the fried dumplings are a must if you ever this place .\\
            & better than other places . & the fried dumplings are a must if you ever visit this place .\\
            \midrule
            DAAE & it 's so much better than the other chinese food places in this area . & fried dumplings are a must .\\
            & it 's much better than the other chinese places in this area . & fried dumplings are a must visit .\\
            & better than the other chinese places in this area . & fried dumplings are a must in this place .\\
            & better than the other places in charlotte . & the fried dumplings are a must we ever visit this .\\
            & better than other places . & the fried dumplings are a must if we ever visit this place .\\
           \toprule
    \end{tabular}
    \caption{\small Interpolations between two input sentences generated by AAE and our model on the Yelp dataset.}
    \label{tab:examples_lerp_yelp}
\end{table*}

\section{Conclusion}

%\balance
This paper provided a thorough analysis of the latent space representations of text autoencoders. 
We showed that simply minimizing the divergence between data and model distributions cannot ensure that the data structure is preserved in the latent space, but straightforward denoising techniques can greatly improve text representations. 
We offered a theoretical explanation for these phenomena by analyzing %the optimal encoder-decoder mapping with or without input perturbations.
the latent space geometry arisen from input perturbations. 
Our results may also help explain the success of BERT~\citep{devlin2018bert}, whose masked language modeling objective is similar to a denoising autoencoder. 
% Previous work has also shown that denoising techniques are particularly effective for learning text representations~\citep{devlin2018bert,lample2018phrase}. We provide the first theoretical analysis of how denoising helps to carve the latent space.

Our proposed DAAE substantially outperforms other text autoencoders in both generation and reconstruction capabilities,
and demonstrates the potential for various text manipulations via simple latent vector arithmetic.
%performing sentence-level vector arithmetic. % In future work, we hope to delve deeper into the latent space geometry of text autoencoders and develop better models for controllable text generation. 
Future work may explore more sophisticated perturbation strategies besides the basic random word deletion, or investigate what additional properties of latent space geometry help  provide finer control over text generation with autoencoders.
Beyond our theory which considered autoencoders that have perfectly optimized their objectives, we hope to see additional analyses in this area that account for the initialization/learning-process and analyze other types of autoencoders.

%\balance
% Acknowledgements should only appear in the accepted version.
\section*{Acknowledgements}
We thank all reviewers and the MIT NLP group for their thoughtful feedback.

% In the unusual situation where you want a paper to appear in the
% references without citing it in the main text, use \nocite
\clearpage

%\balance
\bibliography{paper}
\bibliographystyle{icml2020}

\newpage
\clearpage
\renewcommand{\theHsection}{A\arabic{section}}
\renewcommand{\theHtable}{A\arabic{table}}
\renewcommand{\theHfigure}{A\arabic{figure}}

\appendix

% Reset figure/table numbering for appendix.
\renewcommand\thefigure{\thesection.\arabic{figure}}    
\setcounter{figure}{0}

\renewcommand\thetable{\thesection.\arabic{table}}    
\setcounter{table}{0}

\thispagestyle{empty}
\twocolumn[
\vspace*{-3.5mm}
\begin{center}
\toptitlebar
{\Large \bf
\papertitle \\
\vspace{8pt}
Supplementary Material
}
\bottomtitlebar
\end{center}
]

\section{Wasserstein Distance}\label{sec:wasserstein}
The AAE objective can be connected to a relaxed form of the Wasserstein distance between model and data distributions~\citep{tolstikhin2017wasserstein}.
%Specifically, let $(\mathcal X,d)$ be a metric space and $c(x,y)=d^p(x,y)$ for $p\ge 1$. The $p$-Wasserstein distance between $p_{\text{data}}(x)$ and $p_G(x)$ is defined as:
%\begin{align}\label{eq:wasserstein}
 %   W_p(p_{\text{data}},p_G):=\left(\inf_{\Gamma\in \mathcal P(x\sim p_{\text{data}},y\sim p_G)} \mathbb E_{(x,y)\sim\Gamma}[c(x,y)]\right)^{1/p}
%\end{align}
%where $\mathcal P(x\sim p_{\text{data}},y\sim p_G)$ is the set of all joint distributions of $(x,y)$ with marginals $p_{\text{data}}$ and $y p_G$ respectively.
%Let $c(\cdot,\cdot)$ be a cost function on $\mathcal X$.
% \citet{tolstikhin2017wasserstein} showed that when $G:\mathcal Z\rightarrow\mathcal X$ is a deterministic mapping,
Specifically, for cost function $c(\cdot,\cdot):\mathcal X\times\mathcal X\rightarrow \mathbb R$ and  deterministic decoder mapping $G:\mathcal Z\rightarrow\mathcal X$, it holds that:  
\begin{align}\label{eq:wae}
& \inf_{\Gamma\in \mathcal P(x\sim p_{\text{data}},y\sim p_G)} \mathbb E_{(x,y)\sim\Gamma}[c(x,y)] \nonumber \\
=~ & \inf_{q(z|x):q(z)=p(z)}\mathbb E_{p_{\text{data}}(x)}\mathbb E_{q(z|x)}[c(x,G(z))]
\end{align}
where the minimization over couplings $\Gamma$ with marginals $p_{\text{data}}$ and $p_G$ can be replaced with  minimization over conditional distributions $q(z|x)$ whose marginal $q(z)=\mathbb E_{p_{\text{data}}(x)}[q(z|x)]$ matches the latent prior distribution $p(z)$.
%This allows us to minimize the Wasserstein distance between $p_{\text{data}}$ and $p_G$ via encoder $E:\mathcal X\rightarrow Z$ (define $q(z|x)$ as $p_E(z|x)$).
Relaxing this marginal constraint via a divergence penalty $D(q(z)\|p(z))$
%, we get:
%\begin{align}
%    \inf_{E,G}\mathbb E_{p_{\text{data}}(x)}[c(x,G(E(x)))] + \lambda\cdot D(q(z),p(z))
%\end{align}
%Choosing $D(q(z),p(z))$ as the Jensen–Shannon divergence
 estimated by adversarial training, one recovers the AAE objective (Eq.~\ref{eq:aae}).
In particular, AAE on discrete $x$ with the cross-entropy loss is minimizing an upper bound of the total variation distance between $p_{\text{data}}$ and $p_G$, with $c$ chosen as the indicator cost function~\citep{zhao2018adversarially}.

Our model is optimizing over conditional distributions $q(z|x)$ of the form (\ref{eq:perturbedposterior}), a subset of all possible conditional distributions. Thus, after introducing input perturbations, our method is still minimizing an upper bound of the Wasserstein distance between $p_{\text{data}}$ and $p_G$ described in (\ref{eq:wae}).

\section{Toy Experiments With Latent Dimension 5}\label{sec:toy_experiment_dim5}
Here we repeat our toy experiment with clustered data, this time using a larger latent space with 5 dimensions.

\begin{figure}[h]
\centering
\includegraphics[width=0.48\textwidth]{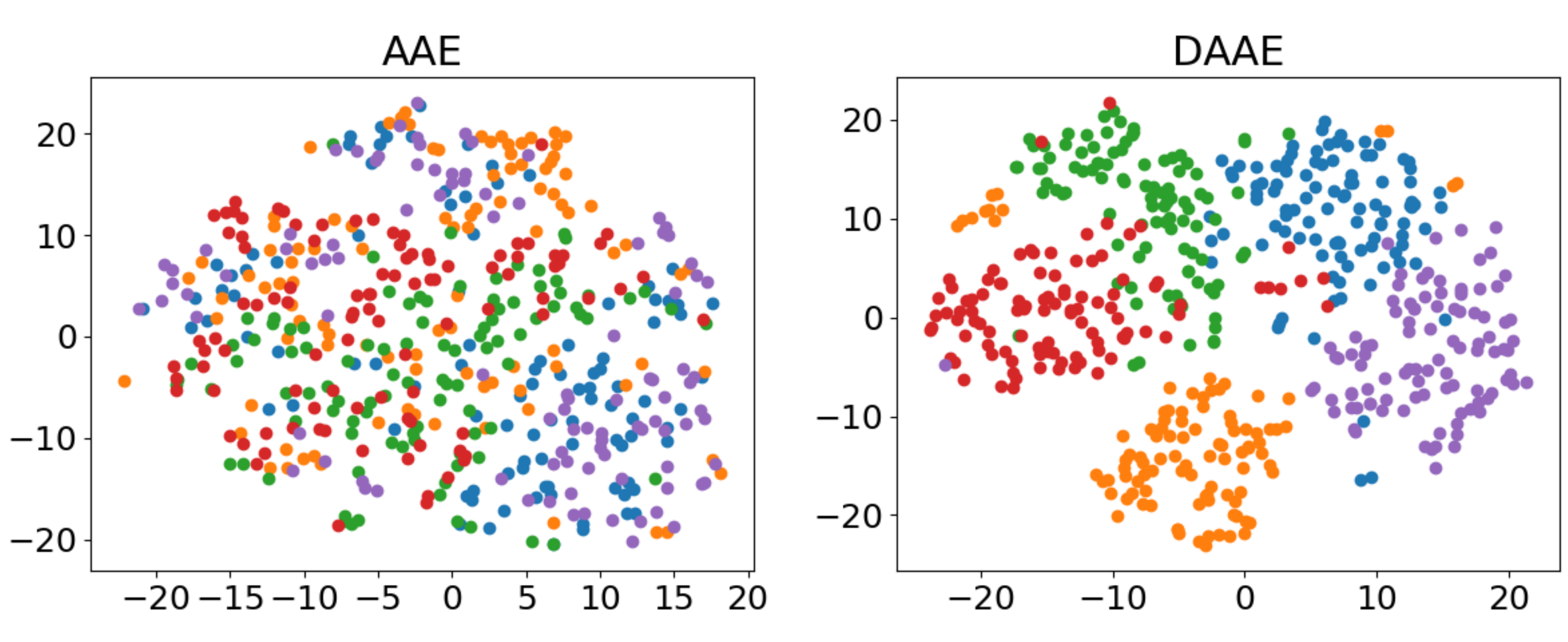}
\vspace{-9pt}
\caption{\small 
$t$-SNE visualization of 5-D latent representations learned by AAE and DAAE when mapping clustered sequences in 
$\mathcal X=\{0, 1\}^{50}$ to $\mathcal Z=\mathbb R^5$. The training data stem from 5 underlying clusters, with 100 sequences sampled from each (colored accordingly by cluster identity). 
% (there are 5 clusters, each with 100 sequences). Different colored points are from different data clusters.
} 
\label{fig:toy_exp_z5}
\end{figure}

\section{Proof of Theorem~\ref{thm:badaae}} \label{sec:badaaeproof}

\badaae*

\begin{proof}
Consider two encoder matchings $x_i$ to $z_{\alpha(i)}$ and $x_i$ to $z_{\beta(i)}$, where both $\alpha$ and $\beta$ are permutations of the indices $\{1,\dots,n\}$. 
Suppose $G_\alpha$ is the optimal decoder model for the first matching (with permutations $\alpha$). 
This implies 
$$
p_{G_\alpha} = \argmax_{G \in \mathcal G_L} \sum_{i=1}^n {\log p_G(x_i | z_{\alpha(i)}) } 
$$
Now let $p_{G_\beta}(x_i|z_j)=p_{G_\alpha}(x_{\beta\alpha^{-1}(i)}|z_j),\forall i, j$. Then $G_\beta$ can achieve exactly the same log-likelihood objective value for matching $\beta$ as $G_\alpha$ for matching $\alpha$, while still respecting the Lipschitz constraint.
\end{proof}

\section{Proof of Theorem~\ref{thm:fourpoints}} \label{sec:fourpointproof}

\fourpoints*

\begin{proof}

Let $[n]$ denote $\{1,\dots, n\}$, and assume without loss of generality that the encoder $E$ maps each $x_i$ to $z_i$. We also define $A = \{1,2\}, B = \{3, 4\}$ as the two $x$-pairs that lie close together. 
%The training objective to be maximized with respect to $p_G(x|z)$ under our AAE with perturbed $x$ is proportional to:
%\begin{align*}
%\sum_{i=1}^4 \mathbb E_{C} \left[ \log p_G(x_i | E(C(x_i))) \right]
%\end{align*}
%where the expectation is over the corruption distribution $C$.
For our choice of $C(x)$,
%it is empirically approximated using the following training objective:
the training objective to be maximized is:
\begin{align*}
 & \sum_{i,j \in A} \log p_G(x_i | E(x_j)) + \sum_{k,\ell \in B} \log p_G(x_k | E(x_\ell))
\\
= & \sum_{i,j \in A} \log p_G(x_i | z_j) + \sum_{k,\ell \in B} \log p_G(x_k | z_\ell )  \numberthis \label{eq:fourpointobj}
\end{align*}

The remainder of our proof is split into two cases:

\textbf{Case 1.} $||z_j - z_\ell || > \zeta$ for $j \in A, \ell \in B$ 

\textbf{Case 2.} $||z_{j}- z_{\ell} ||  < \delta $ for $j \in A, \ell \in B$ 

Under Case 1, $x$ points that lie far apart also have $z$ encodings that remain far apart. Under Case 2, $x$ points that lie far apart have $z$ encodings that lie close together.
We complete the proof by showing that the achievable objective value in Case 2 is strictly worse than in Case 1, and thus an optimal encoder/decoder pair
would avoid the $x,z$ matching that leads to Case 2.

In Case 1 where $||z_j - z_\ell || > \zeta$ for all $j \in A, \ell \in B$, we can lower bound the training objective (\ref{eq:fourpointobj}) by choosing:
\begin{equation}
p_G(x_i | z_j) = 
\begin{cases}
(1-\gamma)/2 \ \ & \text{ if  $i,j \in A$ or $i,j \in B$ } \\
\gamma/2 \ \ & \text{ otherwise}
\end{cases}
\label{eq:optimalp}
\end{equation}
with $\gamma = \sigma(-L \zeta) \in (0,\frac{1}{2})$, where $\sigma(\cdot)$ denotes the sigmoid function.
Note that this ensures $\displaystyle \sum_{i \in [4]} p_G(x_i | z_j) = 1$ for each $j \in [4]$, and does not violate the Lipschitz condition from Assumption~\ref{asm:Lipschitz} since:
\begin{align*}
 | \log p_G(x_i | z_j) - \log p_G(x_i | z_\ell) |
\end{align*}
\vspace{-20pt}
\begin{align*}
\begin{cases}
 ~= 0 & \ \text{ if $j, \ell \in A$ or $j, \ell \in B$}
\\
 ~\le   \log \left( (1-\gamma)/\gamma \right) & \ \text { otherwise}
\end{cases}
\end{align*}

and thus remains $\le L || z_j - z_\ell ||$ when $\gamma = \sigma(-L \zeta)  \ge  \sigma(-L||z_j - z_\ell||) = 1/ [1 + \exp(L||z_j - z_\ell||)]
$.

Plugging the $p_G(x| z)$ assignment from (\ref{eq:optimalp}) into (\ref{eq:fourpointobj}), we see that an optimal decoder can obtain training objective value $ \ge 8 \log \left[\sigma(L \zeta)/2 \right]$ 
in Case 1 where  $||z_j - z_\ell || > \zeta, \ \forall j \in A, \ell \in B$.

\bigskip 

Next, we consider the alternative case where $||z_{j}- z_{\ell} ||  < \delta $ for $j \in A, \ell \in B$.

For $i,j \in A$ and for all $\ell \in B$, we have:
\begin{align*}
    \log p_G(x_i | z_{j}) \le~ & \log p_G(x_i | z_{\ell}) + L||z_{j} - z_{\ell}|| \tag*{(by Assumption~\ref{asm:Lipschitz})} 
\\
\le~ & \log p_G(x_i | z_{\ell}) + L\delta  \ \
\\
\le~ &  L\delta + \log \left[ 1 - \sum_{k \in B} p_G(x_k | z_{\ell}) \right] 
\tag*{(since $\sum_{k} p_G(x_k | z_{\ell}) \le 1$)}
\end{align*}

Continuing from (\ref{eq:fourpointobj}), the overall training objective in this case is thus:

\begin{align*}
& \sum_{i,j \in A} \log p_G(x_i | z_j) + \sum_{k,\ell \in B} \log p_G(x_k | z_\ell )
\\
\le~ & 4L\delta + \sum_{i,j \in A} \min_{\ell \in B}  \log \left[ 1 - \sum_{k \in B}  p_G(x_k | z_{\ell}) \right] \\
 & + \sum_{k,\ell \in B} \log p_G(x_k | z_\ell ) 
\\
\le~ & 4L\delta + \sum_{\ell \in B} \left[ 
 2 \log \left( 1 - \sum_{k \in B}  p_G(x_k | z_{\ell}) \right) \right. \\
 & \left. + \sum_{k \in B} \log p_G(x_k | z_\ell ) \right]
\\
\le~ & 4L\delta - 12 \log 2
\end{align*}
using the fact that the optimal decoder for the bound in this case is: $p_G(x_k | z_{\ell}) = 1/4$ for all $k,\ell \in B$.

Finally, plugging our range for $\delta$ stated in the Theorem~\ref{thm:fourpoints}, it shows that the best achievable objective value in Case 2 is strictly worse than the objective value achievable in Case 1.  Thus, the optimal encoder/decoder pair under the AAE with perturbed $x$ will always prefer the matching between $\{x_1,\dots, x_4\}$ and $\{z_1,\dots, z_4\}$ that ensures nearby $x_i$ are encoded to nearby $z_i$ (corresponding to Case 1).
\end{proof}

\section{Proof of Theorem~\ref{thm:clusteredx}}\label{sec:proof}
\clusteredx*
\begin{proof}
%Given training examples $x_1,\cdots,x_n$ and latent points $z_1,\cdots,z_n$ where the encoder $E$ maps each $x_i$ to $z_i$,
Without loss of generality, let $E(x_i)=z_i$ for notational convenience.
We consider what is the optimal decoder probability assignment $p_G(x_i|z_j)$ under the Lipschitz constraint~\ref{asm:Lipschitz}.

The objective of the AAE with perturbed $x$ is to maximize:
\begin{align*}
& \frac{1}{n}\sum_i\sum_j p_C(x_j|x_i) \log p_G(x_i|E(x_j)) \\
=~ & \frac{1}{nK} \sum_j \sum_{i:S_i=S_j} \log p_G(x_i|z_j)
\end{align*}
We first show that the optimal $p_G(\cdot|\cdot)$ will satisfy that the same probability is assigned within a cluster, i.e. $p(x_i|z_j)=p(x_k|z_j)$ for all $i,k$ s.t. $S_i=S_k$. If not, let $P_{sj}=\sum_{i:S_i=s} p_G(x_i|z_j)$, and we reassign $p_{G'}(x_i|z_j)=P_{S_ij}/K$. Then $G'$ still conforms to the Lipschitz constraint if $G$ meets it, and $G'$ will have a larger target value than $G$.

Now let us define $P_j=\sum_{i:S_i=S_j} p_G(x_i|z_j)=K\cdot p_G(x_j|z_j)$ $(0\le P_j\le 1)$.
The objective becomes:
\begin{align*}
    & \max_{p_G} \frac{1}{nK} \sum_j \sum_{i:S_i=S_j} \log p_G(x_i|z_j)\\
    =~ & \max_{p_G}\frac{1}{n} \sum_j \log p_G(x_j|z_j)\\
    =~ & \max_{p_G}\frac{1}{n} \sum_j \log P_j - \log K\\
    =~ & \max_{p_G}\frac{1}{2n^2} \sum_i\sum_j (\log P_i + \log P_j) - \log K\\
    \le~ & \frac{1}{2n^2} \sum_i\sum_j\max_{p_G} (\log P_i + \log P_j) - \log K
\end{align*}

Consider each term ${\max_{p_G} (\log P_i + \log P_j)}$: when $S_i=S_j$, this term can achieve the maximum value $0$ by assigning ${P_i=P_j=1}$; when ${S_i\ne S_j}$, the Lipschitz constraint ensures that:
\begin{align*}
    \log (1-P_i) \ge \log P_j - L\|z_i-z_j\|\\
    \log (1-P_j) \ge \log P_i - L\|z_i-z_j\|
\end{align*}
Therefore:
\begin{align*}
    \log P_i + \log P_j \le 2\log\sigma(L\|z_i-z_j\|)
\end{align*}
Overall, we thus have:
\begin{align*}
    & \max_{p_G} \frac{1}{nK} \sum_j \sum_{i:S_i=S_j} \log p_G(x_i|z_j) \nonumber \\
    \le~ & \frac{1}{n^2} \sum_{i,j:S_i\ne S_j} \log\sigma(L\|z_i-z_j\|) - \log K
\end{align*} \qedhere
\end{proof}

\section{Experimental Details}\label{sec:expdetail}
We use the same architecture to implement all models with different objectives. The encoder $E$, generator $G$, and the language model used to compute Forward/Reverse PPL are one-layer LSTMs with hidden dimension 1024 and word embedding dimension 512. The last hidden state of the encoder is projected into 128/256 dimensions to produce the latent code $z$ for Yelp/Yahoo datasets respectively, which is then projected and added with input word embeddings fed to the generator. The discriminator $D$ is an MLP with one hidden layer of size 512.
$\lambda$ of AAE based models is set to 10 to ensure the latent codes are indistinguishable from the prior.
All models are trained via the Adam optimizer~\citep{kingma2014adam} with learning rate 0.0005, $\beta_1=0.5$, $\beta_2=0.999$.
At test time, encoder-side perturbations are disabled, and we use greedy decoding to generate $x$ from $z$.

\section{Human Evaluation}\label{sec:humaneval}
For the tense transfer experiment, the human annotator is presented with a source sentence and two outputs (one from each approach, presented in random order) and asked to judge which one successfully changes the tense while being faithful to the source, or whether both are good/bad, or if the input is not suitable to have its tense inverted.
We collect labels from two human annotators and if they disagree, we further solicit a label from the third annotator.

\clearpage

\onecolumn

\section{Neighborhood Preservation}

%Here, we repeat the neighborhood preservation analysis for AE models trained on the Yahoo dataset.

Here we include non-generative models AE, DAE and repeat the neighborhood preservation analysis in Section~\ref{sec:neighborhood}. We find that an untrained RNN encoder from random initialization has a good recall rate, and we suspect that SGD training of vanilla AE towards only the reconstruction loss will not overturn this initial bias. Note that denoising still improves neighborhood preservation in this case. Also note that DAAE has the highest recall rate among all generative models that have a latent prior imposed.

\iffalse
\begin{figure}[ht]
\centering
 \includegraphics[width=0.48\textwidth]{}
 \caption{\small Recall rate on the Yahoo dataset of 10 nearest neighbors in the sentence space retrieved by $k$ nearest neighbors in the latent space under different models.}
 \label{fig:recall_yahoo}
\end{figure}
\fi

\begin{figure*}[!htbp]
\centering
\includegraphics[width=0.99\textwidth]{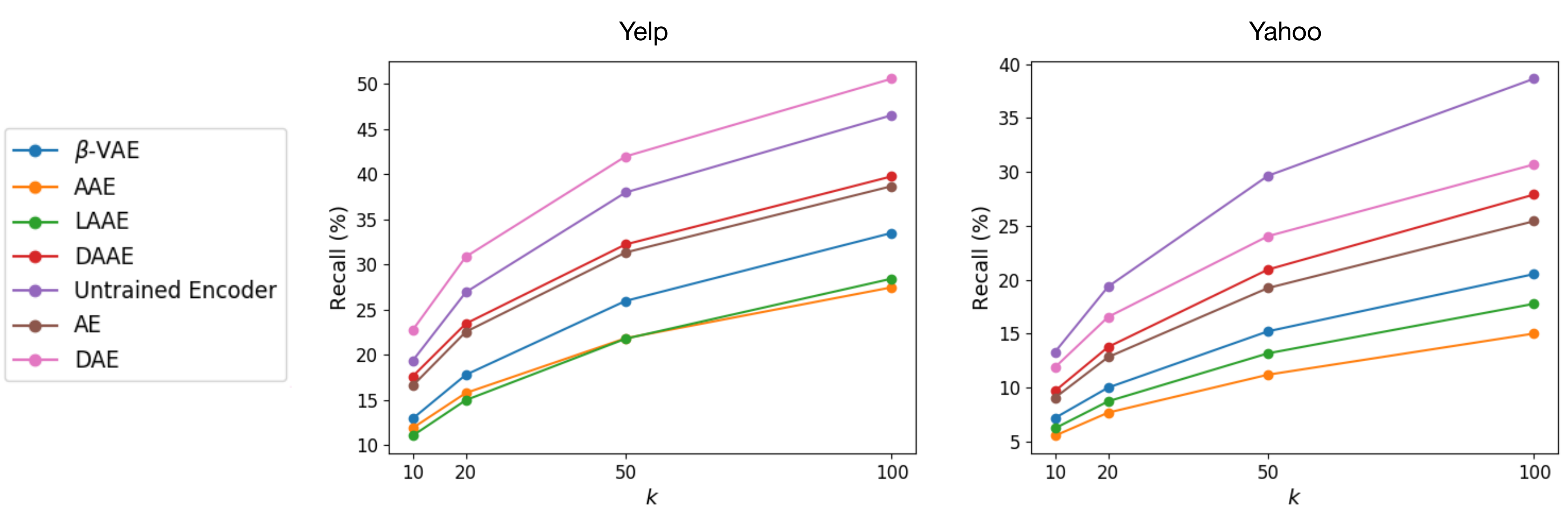}
\caption{\small Recall rate of 10 nearest neighbors in the sentence space retrieved by $k$ nearest neighbors in the latent space of different autoencoders on the Yelp and Yahoo datasets.
}
\label{fig:recall_all}
\end{figure*}

\iffalse
\begin{figure}[!htbp]
\centering
\includegraphics[width=0.48\textwidth]{}
\includegraphics[width=0.48\textwidth]{}
\vspace{-10pt}
\caption{\small Recall rate of 10 nearest neighbors in the sentence space retrieved by $k$ nearest neighbors in the latent space on the Yelp and Yahoo datasets. Here we include non-generative models AE and DAE. We find that an untrained RNN encoder from random initialization has a good recall rate, and we suspect that SGD training of vanilla AE towards only the reconstruction loss will not overturn this initial bias. Note that denoising still improves neighborhood preservation in this case. Also note that DAAE has the highest recall rate among all generative models that have a latent prior imposed.
}
\label{fig:recall_all}
\end{figure}
\fi

\vspace{50pt}

\begin{table*}[ht]
    \def\arraystretch{1.1}\setlength{\tabcolsep}{3pt}
    \scriptsize
    \centering
    \begin{tabular}{ll}
            \toprule
            \textbf{Source} & \textbf{how many gospels are there that were n't included in the bible ?}\\
            \\
            5-NN by AAE & there are no other gospels that were n't included in the bible . \\
            & how many permutations are there for the letters in the word \_UNK ' ? \\
            & anyone else picked up any of the \_UNK in the film ? \\
            & what 's the significance of the number 40 in the bible ? \\
            & how many pieces of ribbon were used in the \_UNK act ? \\
            \\
            5-NN by DAAE & there are no other gospels that were n't included in the bible . \\
            & how many litres of water is there in the sea ? \\
            & how many \_UNK gods are there in the classroom ? \\
            & how many pieces of ribbon were used in the \_UNK act ? \\
            & how many times have you been grounded in the last year ? \\
            \midrule
            \textbf{Source} & \textbf{how do i change colors in new yahoo mail beta ?}\\
            \\
            5-NN by AAE & how should you present yourself at a \_UNK speaking exam ?\\
            & how can i learn to be a hip hop producer ?\\
            & how can i create a \_UNK web on the internet ?\\
            & how can i change my \_UNK for female not male ?\\
            & what should you look for in buying your first cello ?\\
            \\
            5-NN by DAAE & how do i change that back to english ?\\
            & is it possible to \_UNK a yahoo account ?\\
            & how do i change my yahoo toolbar options ?\\
            & what should you look for in buying your first cello ?\\
            & who do you think should go number one in the baseball fantasy draft , pujols or \_UNK ?\\
           \toprule
    \end{tabular}
    \caption{\small Examples of nearest neighbors in the latent Euclidean space of AAE and DAAE on Yahoo dataset.}
    \label{tab:examples_nn_yahoo}
\end{table*}

\clearpage

\section{Generation-Reconstruction Results on the Yahoo Dataset}

In this section, we repeat the autoencoder generation-reconstruction analysis in Section~\ref{sec:gen_rec_tradeoff} on the Yahoo dataset. As on the Yelp dataset, our DAAE model provides the best trade-off.

\vspace{10pt}

\begin{figure}[htb!]
\centering
\includegraphics[width=\textwidth]{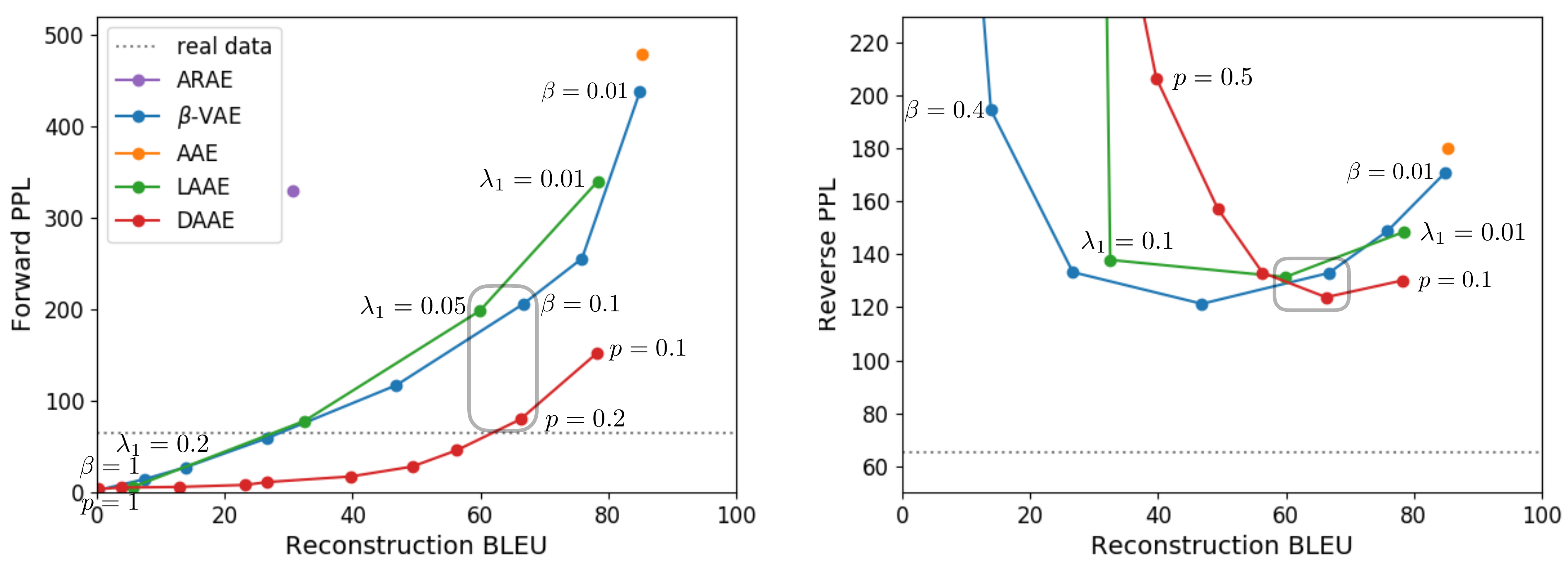}
\caption{\small Generation-reconstruction trade-off of various text autoencoders on the Yahoo dataset.
The ``real data'' line marks the PPL of a language model trained and evaluated on real data.
We strive to approach the lower right corner with both high BLEU and low PPL.
The grey box identifies hyperparameters we finalize for respective models.
Points of severe collapse (Reverse PPL > 300) are removed from the right panel.
}
 \label{fig:gen_rec_yahoo}
\end{figure}

\clearpage

\section{Additional Examples of Generated Text}

This section presents more examples of text decoded after various geometric manipulations of the latent space representations.

\vspace{10pt}

\begin{table*}[ht]
    \def\arraystretch{1.1}\setlength{\tabcolsep}{3pt}
    \scriptsize
    \centering
    \begin{tabular}{lll}
           \toprule
           \textbf{Input} & \textbf{the staff is rude and the dr. does not spend time with you .} & \textbf{slow service , the food tasted like last night 's leftovers .}\\
           ARAE & the staff is rude and the dr. does not worth two with you . & slow service , the food tasted like last night 's leftovers .\\
           $\beta$-VAE & the staff was rude and the dr. did not spend time with your attitude . & slow service , the food tastes like last place serves .\\
           AAE & the staff was rude and the dr. does not spend time with you . & slow service , the food tasted like last night 's leftovers .\\
           LAAE & the staff was rude and the dr. is even for another of her entertained . & slow service , the food , on this burger spot ! \\
           DAAE & the staff was rude and the dr. did not make time with you . & slow service , the food tastes like last night ... .\\
           \\
           \textbf{Input} & \textbf{they are the worst credit union in arizona .} & \textbf{i reported this twice and nothing was done .}\\
           ARAE & they are the worst bank credit in arizona . & i swear this twice and nothing was done .\\
           $\beta$-VAE & they were the worst credit union in my book . & i 've gone here and nothing too .\\
           AAE & they are the worst credit union in arizona . & i reported this twice and nothing was done .\\
           LAAE & they were the worst credit union in my heart . & i dislike this twice so pleasant guy .\\
           DAAE & they were the worst credit union in arizona ever . & i hate this pizza and nothing done .\\
           \bottomrule
    \end{tabular}
    \caption{\small Additional examples of vector arithmetic for tense inversion.}
    \label{tab:examples_tense_more}
\end{table*}

\vspace{50pt}

\begin{table*}[ht]
    \def\arraystretch{1.1}\setlength{\tabcolsep}{3pt}
    \fontsize{6.8}{8}\selectfont
    \centering
    \begin{tabular}{lll}
            \toprule
            & AAE & DAAE\\
            \midrule
            \textbf{Input} & \textbf{this woman was extremely rude to me .} & \textbf{this woman was extremely rude to me .}\\
            $+v$ & this woman was extremely rude to me . & this woman was extremely nice .\\
            $+1.5v$ & this woman was extremely rude to baby . & this staff was amazing .\\
            $+2v$ & this woman was extremely rude to muffins . & this staff is amazing . \\
            \midrule
            \textbf{Input} & \textbf{my boyfriend said his pizza was basic and bland also .} & \textbf{my boyfriend said his pizza was basic and bland also .}\\
            $+v$ & my boyfriend said his pizza was basic and tasty also . & my boyfriend said his pizza is also excellent .\\
            $+1.5v$ & my shared said friday pizza was basic and tasty also . & my boyfriend and pizza is excellent also .\\
            $+2v$ & my shared got pizza pasta was basic and tasty also . & my smoked pizza is excellent and also exceptional . \\
            \midrule
            \textbf{Input} & \textbf{the stew is quite inexpensive and very tasty .} & \textbf{the stew is quite inexpensive and very tasty .}\\
            $-v$ & the stew is quite inexpensive and very tasty . & the stew is quite an inexpensive and very large .\\
            $-1.5v$ & the stew is quite inexpensive and very very tasteless . & the stew is quite a bit overpriced and very fairly brown .\\
            $-2v$ & the -- was being slow - very very tasteless . & the hostess was quite impossible in an expensive and very few customers . \\
            \midrule
            \textbf{Input} & \textbf{the patrons all looked happy and relaxed .} & \textbf{the patrons all looked happy and relaxed .}\\
            $-v$ & the patrons all looked happy and relaxed . & the patrons all helped us were happy and relaxed .\\
            $-1.5v$ & the patrons all just happy and smelled . & the patrons that all seemed around and left very stressed .\\
            $-2v$ & the patrons all just happy and smelled . & the patrons actually kept us all looked long and was annoyed .\\
           \bottomrule
    \end{tabular}
    \caption{\small Additional examples of vector arithmetic for sentiment transfer.}
    \label{tab:examples_sentiment_more}
\end{table*}

\begin{table*}[ht]
    \def\arraystretch{1.1}\setlength{\tabcolsep}{3pt}
    \scriptsize
    \centering
    \begin{tabular}{ll}
            \toprule
            \textbf{Input 1} & \textbf{what language should i learn to be more competitive in today 's global culture ?}\\
            \textbf{Input 2} & \textbf{what languages do you speak ?}\\
            \\
            AAE & what language should i learn to be more competitive in today 's global culture ?\\
            & what language should i learn to be more competitive in today 's global culture ?\\
            & what language should you speak ?\\
            & what languages do you speak ?\\
            & what languages do you speak ?\\
            \\
            DAAE & what language should i learn to be more competitive in today 's global culture ?\\
            & what language should i learn to be competitive today in arabic 's culture ?\\
            & what languages do you learn to be english culture ?\\
            & what languages do you learn ?\\
            & what languages do you speak ?\\
            \midrule
            \textbf{Input 1} & \textbf{i believe angels exist .}\\
            \textbf{Input 2} & \textbf{if you were a character from a movie , who would it be and why ?}\\
            \\
            AAE & i believe angels exist .\\
            & i believe angels - there was the exist exist .\\
            & i believe in tsunami romeo or <unk> i think would it exist as the world population .\\
            & if you were a character from me in this , would we it be ( why !\\
            & if you were a character from a movie , who would it be and why ?\\
            \\
            DAAE & i believe angels exist .\\
            & i believe angels exist in the evolution .\\
            & what did <unk> worship by in <unk> universe ?\\
            & if you were your character from a bible , it will be why ?\\
            & if you were a character from a movie , who would it be and why ?\\
            
           \toprule
    \end{tabular}
    \caption{\small Interpolations between two input sentences generated by AAE and our model on the Yahoo dataset.}
    \label{tab:examples_lerp_yahoo}
\end{table*}

\end{document}